\definecolor{duong}{rgb}{0,0,0}
\newtheorem{definition}{Definition}
\newtheorem{theorem}{Theorem}
\newtheorem{lemma}{Lemma}
\newtheorem{example}{Example}
\newtheorem{assumption}{Assumption}
\newtheorem{challenge}{Challenge}
\title{Distortion Resilience for Goal-Oriented \\ Semantic Communication}
\author{Minh-Duong~Nguyen, 
        Quang~Vinh~Do, 
        Zhaohui Yang,~\IEEEmembership{Member,~IEEE}, \\
        Quoc-Viet Pham,~\IEEEmembership{Senior Member,~IEEE,}
        and Won-Joo~Hwang,~\IEEEmembership{Senior Member,~IEEE}
\thanks{Minh-Duong Nguyen and Won-Joo~Hwang (corresponding author) are with the Department of Information Convergence Engineering, Pusan National University, Busan 46241, Republic of Korea (e-mail: \{duongnm, wjhwang\}@pusan.ac.kr).}
\thanks{Quang~Vinh~Do is with Wireless Communications Research Group, Faculty of Electrical and Electronics Engineering, Ton Duc Thang University, Ho Chi Minh City, Vietnam (e-mail: dovinhquang@tdtu.edu.vn).}
\thanks{Zhaohui Yang is with the College of Information Science and Electronic Engineering, Zhejiang University, Hangzhou 310027, China, and also with the Zhejiang Provincial Key Laboratory of Information Processing, Communication and Networking (IPCAN), Zhejiang University, Hangzhou 310027, China (e-mail:  yang\_zhaohui@zju.edu.cn).}
\thanks{Quoc-Viet Pham is with the School of Computer Science and Statistics, Trinity College Dublin, The University of Dublin, Dublin 2, D02 PN40, Ireland (e-mail: viet.pham@tcd.ie).}
\thanks{This work was supported by the National Research Foundation of Korea (NRF) grant funded by the Korea government (MSIT)(RS-2024-00336962); was supported by Institute of Information \& communications Technology Planning \& Evaluation (IITP) under the Artificial Intelligence Convergence Innovation Human Resources Development (IITP-2024-RS-2023-00254177) grant funded by the Korea government (MSIT); was supported by the MSIT (Ministry of Science and ICT), Korea, under the ITRC (Information Technology Research Center) support program (IITP-2024-RS-2023-00260098) supervised by the IITP (Institute for Information \& Communications Technology Planning \& Evaluation); was supported by the Korea Agency for Infrastructure Technology Advancement (KAIA) grant funded by the Ministry of Land Infrastructure and Transport (No. RS-2023-00256816).
Following were results of a study on the ``Leaders in INdustry-university Cooperation 3.0 Project'', supported by the Ministry of Education and National Research Foundation of Korea.
}
}
\begin{document}
\maketitle
\begin{abstract}
Recent research efforts on Semantic Communication (SemCom) have mostly considered accuracy as a main problem for optimizing goal-oriented communication systems. However, these approaches introduce a paradox: the accuracy of Artificial Intelligence (AI) tasks should naturally emerge through training rather than being dictated by network constraints. Acknowledging this dilemma, this work introduces an innovative approach that leverages the rate distortion theory to analyze distortions induced by communication and compression, thereby analyzing the learning process. Specifically, we examine the distribution shift between the original data and the distorted data, thus assessing its impact on the AI model's performance. Founding upon this analysis, we can preemptively estimate the empirical accuracy of AI tasks, making the goal-oriented SemCom problem feasible. To achieve this objective, we present the theoretical foundation of our approach, accompanied by simulations and experiments that demonstrate its effectiveness. The experimental results indicate that our proposed method enables accurate AI task performance while adhering to network constraints, establishing it as a valuable contribution to the field of signal processing. Furthermore, this work advances research in goal-oriented SemCom and highlights the significance of data-driven approaches in optimizing the performance of intelligent systems. 
\end{abstract}

\begin{IEEEkeywords}
Communication efficiency, data compression, goal-oriented semantic communication, resource allocation.
\end{IEEEkeywords}
\section{Introduction}
The emergence of sixth-generation (6G) communication technologies marks a pivotal shift from serving individual users to serving as the foundation for the expansive Internet-of-Everything (IoE) \cite{2020-IoE-Survey1}. In particular, 6G communication technologies facilitate the seamless integration of humans, machines, and objects, fostering intelligent cooperation and synergy. They actively drive the development of an inclusive and intelligent human society, as envisioned in 2019 \cite{2019-6G-Visions}.

However, traditional end-to-end information transmission systems, which rely on resource optimization at the physical layer and stable transmission protocols at the network layer, are no longer sufficient to meet the increasing demands for complex, diverse, and intelligent information transmission. This includes providing support for emerging applications such as virtual reality, holographic projection, and the Metaverse \cite{2023-SemCom-VideoConferencing, 2022-SemCom-DeepWiVe, 2020-SemCom-NetworkCompression, 2023-Sem-IGS-AGC}. Consequently, it is now imperative to establish a novel communication paradigm that efficiently caters to the evolving requirements of future wireless communication applications.

Fortunately, Semantic Communication (SemCom) \cite{2021-SemComSurvey1, 2021-SemComSurvey2, 2022-SemcomSurvey3, 2022-SemcomSurvey4} emerges as a promising architectural innovation poised to redefine communication in the IoE era. \textcolor{duong}{The ultimate objective of SemCom is to efficiently transmit the content-aware and semantic information tailored to specific tasks, thus bringing the grand vision of the IoE closer to reality. To achieve the aforementioned objective, the receiver's capability to reconstruct the original data, accepting a degree of data distortion as long as it surpasses a predefined Artificial Intelligence (AI) task performance threshold.} To this end, various studies have introduced goal-oriented SemCom approaches that leverage semantic metrics (e.g., semantic similarity, semantic completeness, and semantic accuracy) to ensure effective AI task execution within the constraints of wireless communication environments. 
\textcolor{duong}{For instance, in \cite{2021-SEM-DeepSC, 2023-SemCom-Memory, 2022-SemCom-TOMUSC}, the authors propose attention-based encoders, which employ semantic channel coding for text transmission. It maximizes data rates while preventing information loss by incorporating mutual information into the loss function, accommodating variable-length input texts and output symbols. 
Additionally, a novel performance metric, phrase similarity, resembling human judgment, was introduced to assess semantic errors. In \cite{2022-Sem-TextTransmission-RedudancyRemoval}, the author introduces an innovative approach to semantically aware speech-to-text transmission. This method is built upon a soft alignment and redundancy elimination module that utilizes attention networks. The primary objective of this approach is to remove irrelevant information, resulting in more concise representations.
In the SemCom for computer vision tasks, the metrics used vary. The studies in \cite{2022-SemCom-NTSCC, 2021-SemCom-LiteDSCC, 2022-SemCom-IBA} utilize Mean Square Error (MSE) and KL divergence between the original and reconstructed images as learning metrics to enhance the performance of semantic encoders. Additionally, the research in \cite{2024-SemCom-AdaSem} incorporates application-oriented distortion from PoseNet \cite{2021-DL-PoseNet} into the MSE, creating a joint loss function for goal-oriented SemCom aimed at camera relocalization.}

Despite numerous efforts to reduce redundancy in SemCom, there is a potential drawback in terms of information loss that significantly impairs the quality of data transmission. As a consequence, efficient resource allocation plays a pivotal role in enhancing SemCom system performance. \textcolor{duong}{As an example, the study in \cite{2022-Sem-RA-TextSem, 2023-Sem-SCRA-D2DVN, 2024-Sem-ARA-SCN, 2024-Sem-OIT-CSSN} introduces a new metric that is contingent on task accuracy, to optimize energy efficiency within SemCom networks.} In tandem, the authors of papers \cite{2022-SEM-AdaptableSemanticCompression, 2023-Sem-TOSCN, 2022-FL-CompAided, 2023-FL-SCFL} design a framework for goal-oriented multi-user SemCom. This framework empowers users to efficiently extract, compress, and transmit semantic information to the edge servers. Notably, it features an algorithm for controlling compression ratios within the SemCom network. In \cite{2022-Sem-StochasticRA}, a Stochastic Semantic Transmission Scheme (SSTS) is designed to minimize transmission and storage costs within the network, considering the uncertainty of users' demands. \textcolor{duong}{Researches presented in \cite{2022-Sem-QOERA, 2023-SemCom-EE-RSMA, 2023-Sem-RateSplitting} leverage semantic rate and semantic accuracy to define Quality-of-Experience (QoE) metrics and employ the distinctive QoE model to define the optimization problem, encompassing the number of transmitted semantic symbols, channel assignment, and power allocation.}

Although numerous researches have been made in terms of resource optimization, current goal-oriented SemCom approaches face two primary challenges:

\begin{challenge}[Inconsistent semantic metrics]
    The challenge pertains to the effectiveness of existing optimization methods for SemCom systems, which heavily rely on various semantic metrics (e.g., semantic accuracy \cite{2022-Sem-QOERA, 2023-Sem-TOSCN, 2024-Sem-OIT-CSSN}, semantic completeness \cite{2022-SEM-AdaptableSemanticCompression, 2023-Sem-ORA-SAECS}, and semantic similarity \cite{2022-Sem-TextTransmission-RedudancyRemoval, 2022-Sem-RA-TextSem, 2023-Sem-NOMA}). Unfortunately, these semantic metrics exhibit inconsistency across different tasks. Specifically, the lack of uniformity necessitates the adaptation of optimization approaches to meet the unique requirements of each task. Consequently, the existing optimization techniques for SemCom suffer from a notable limitation: \emph{they lack applicability within a complex SemCom network comprising multiple AI tasks}.
\label{challenge:1}
\end{challenge}

\begin{challenge}[Straggling dilemma]
    The challenge stems from the inherent dilemma posed by semantic metrics. Specifically, the purpose of optimization in goal-oriented SemCom system is to enhance the efficiency of distributed AI learning tasks via wireless channels \cite{2024-Sem-AdaptiveFeedBack, 2024-Sem-QKD}. However, the utilization of semantic metrics necessitates the availability of feedback obtained from completed AI training tasks \cite{2022-Sem-AttentionRL, 2023-Sem-ORA-SAECS, 2023-Sem-SCRA-D2DVN}. Consequently, current optimization techniques for SemCom face a significant drawback, i.e., their infeasibility. This is because AI training tasks for the SemCom network optimization must be completed before device performance can be labelled, resulting in a considerable straggling problem.
\label{challenge:2}
\end{challenge}
To address the two aforementioned challenges, we design a novel semantic metric called \emph{semantic distortion}. To be more specific, semantic distortion refers to the interference introduced by diverse factors through a comprehensive SemCom system, characterized by a MSE between original and reconstructed data at the transmitter and receiver, respectively. To gain a deeper insight into our work, we consider the following example of semantic distortion.
\begin{example}[Distorted Data Impact on AI Performance]
Consider an image being transmitted through a communication channel and undergoing various distortions, as depicted in Figure~\ref{fig:garnacho}. The AI model at the receiver possesses the capability to extract information from the image to identify the human identity depicted in the image. As shown in the figures, when the noise affecting the original image is minimal (Figure~\ref{fig:gauss1}), it is easy to recognize that the image portrays Garnacho, specifically a player from Manchester United. However, as the noise levels increase, the task of identifying the human subject becomes progressively challenging (as in Figures~\ref{fig:gauss2} and \ref{fig:gauss3}). 
\end{example}
\begin{figure}[!htb]
	\centering
	\subfloat[\label{fig:gauss0}]{\includegraphics[width=0.44\linewidth]{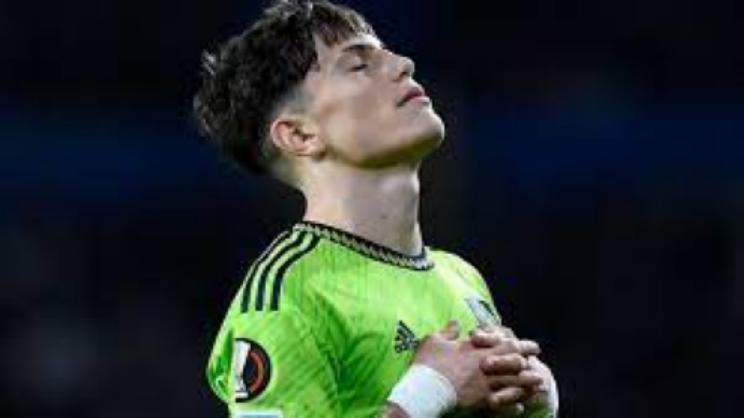}} 
	\subfloat[\label{fig:gauss1}]{\includegraphics[width=0.44\linewidth]{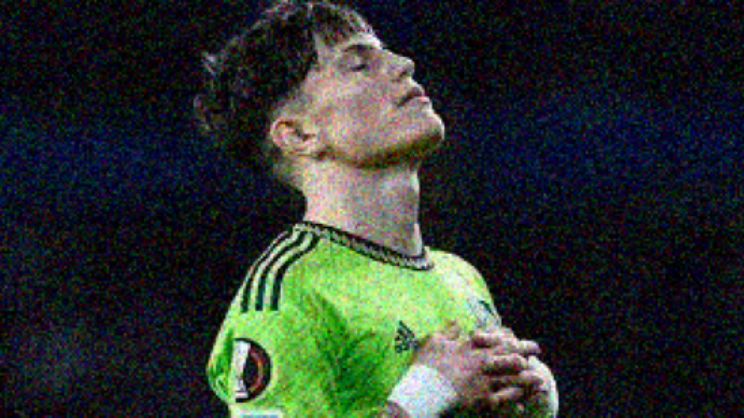}} \\
	\subfloat[\label{fig:gauss2}]{\includegraphics[width=0.44\linewidth]{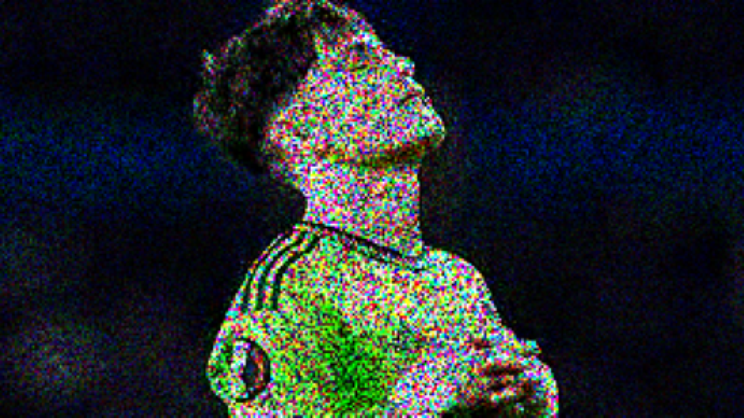}}
    \subfloat[\label{fig:gauss3}]{\includegraphics[width=0.44\linewidth]{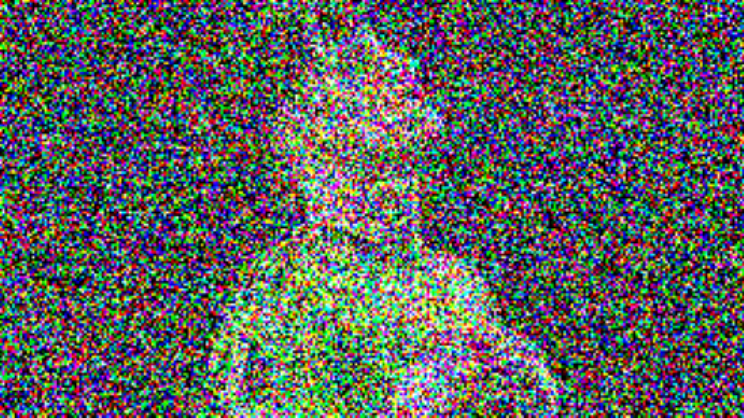}}
    \caption{Illustration of an image being affected by different distortions via the communication channel. a) the original image, b) image affected by Gaussian noise with variance of 25, c) image affected by spackle noise with variance of 100, d) image affected by Gaussian noise with variance of 250.}
\label{fig:garnacho}
\end{figure}
Similar to human cognition, AI models experience degradation when exposed to training images with noise levels exceeding a specific threshold. Therefore, \emph{by setting a specific threshold for image distortion} (effectively limiting the distortion introduced during communication), we can guarantee that our AI model at the receiver can operate and train effectively. This approach, namely Distortion Resilience for Goal-Oriented Semantic Communication (DRGO), allows us to assess the performance of a goal-oriented SemCom system while disregarding the accuracy achieved through the learning process as a factor influencing the evaluation of the communication system's efficiency, thus, enable the online goal-oriented learning capability of SemCom system. More specifically, DRGO addresses these challenges as follows.
\begin{itemize}
    \item In response to Challenge~\ref{challenge:1}, DRGO for semantic distortion provides a unifying solution that allows all other semantic metrics to be transformed into distortion. Subsequently, the semantic distortion enables the network optimization across various AI tasks within a unified SemCom system.
    \item \textcolor{duong}{In response to Challenge~\ref{challenge:2}, by adopting the distortion perspective, we can consistently evaluate the entire network system. Furthermore, DRGO approximates the degradation in goal-oriented AI performance resulting from data distortions induced by wireless channels. Consequently, we can evaluate the goal-oriented AI performance before applying the training at the receiver, thus, mitigate the straggling dilemma efficiently.}
\end{itemize}
In summary, our contributions are summarized as follows: 
\begin{itemize}
    \item We undertake a comprehensive analysis of the SemCom process by using information theory, particularly focusing on data distortions. The entire SemCom process (encompassing the transmission model, compression model, and AI task model) can be consistently and comprehensively evaluated through a unified formulation.
    \item To minimize transmission time, we develop DRGO that adaptively determines optimal compression ratios, and wireless resource allocation to meet the lower-bound requirements for task performance decisions. By framing the problem in terms of data distortions, we make it feasible to online goal-oriented learning.
    \item \textcolor{duong}{To confront the difficulties of non-convex and intricate problem, we present a DRL approach, specifically Deep Deterministic Policy Gradient with Explicit Constraints and Implicit Constraints (DDPG-EI). DDPG-EI is designed to reduce the optimization space of the DRGO problem. Consequently, DDPG-EI shows substantial enhancement compared to traditional DRL techniques.}
    \item We conduct extensive experiments to validate the efficacy of our proposed algorithm. The results reveal significant improvements in transmission rates compared to conventional communication processes while concurrently maintaining the desired performance levels of AI tasks.
\end{itemize}
This paper is structured as follows. Section~\ref{sec:system-model-fundamentals} describes the system model and preliminaries. Section~\ref{sec:problem-formulation} explains the problem formulation. Section~\ref{sec:drl-approach} presents the deep reinforcement learning approach to solve the problem. Section~\ref{sec:experimental-results} demonstrate the results, respectively. Section~\ref{sec:conclusion} concludes the paper.

\section{System Model and Fundamentals} \label{sec:system-model-fundamentals}
We consider a wireless SemCom system consisting of $U$ users and one base station (BS), as illustrated in Figure~\ref{fig:Semcom-Architecture}. Each user transmits semantic information to the BS at the network edge, which is equipped with computing capabilities to perform AI operations. \textcolor{duong}{The transmitter consists of a semantic encoder that compresses the data, and a channel encoder that generates symbols to facilitate the transmission over the wireless network. The BS acts as a receiver and is equipped with a channel decoder for symbol detection and a semantic decoder that reconstructs the data.} An AI model is integrated into the BS to perform intelligent computing according to the received data. Consequently, the BS returns the task results back to users. In our research, we categorize AI tasks into two operations: AI training process and AI execution.

\subsection{System Model}
\label{sec:system-model}
To execute the transmission of source information on each user $u$, the transmitter first extracts its meaning. This process involves sequentially encoding the original data \textcolor{duong}{$s_u\in\mathbb{R}^{d(s_u)\times 1}$, where $d(\cdot)$ represents the size of the data,} through semantic and channel encoders. Hence, \textcolor{duong}{the encoded symbols, $x_u\in\mathbb{C}^{d(x_u)\times 1}$}, can be expressed as
\begin{align}
    x_u = f_{\alpha}\left(f_{\beta}(s_u)\right),
\label{eq:encoder}
\end{align}
\textcolor{duong}{where $f_{\beta}:\mathbb{R}^{d(s_u)\times 1}\rightarrow\mathbb{R}^{d(z_u)\times 1}$ represents the deep semantic encoder with the parameter set $\mathbf{\beta}$ that encodes the original data into data representations $z_u$, and $f_{\alpha}:\mathbb{R}^{d(z_u)\times 1}\rightarrow\mathbb{R}^{d(x_u)\times 1}$ represents the channel encoder characterized by the parameter set $\mathbf{\alpha}$.} If $x_u$ is transmitted, the signal is affected by the wireless channel, denoted as $P(\hat{x}_u\vert x_u)$ (i.e., the distributional probability of \textcolor{duong}{received signal $\hat{x}\in \mathbb{C}^{d(x_u)\times 1}$} given the transmitted signal $x_u$). The transmission process from the users to the BS can be modeled as
\begin{align}
    \hat{x}_u = \mathbf{h}_u\cdot x_u + \mathbf{n}_u
\end{align}
where $\mathbf{h}_u$ represents the Rayleigh fading channel, and $\mathbf{n}_u \sim \mathcal{C}\mathcal{N}(0, \sigma^2_{\textrm{model},u})$ denotes Independent and Identically Distributed (IID) Gaussian noise vector with variance $\sigma^2_{\textrm{model},u}$ for each channel between user $u$ and the BS. \textcolor{duong}{At the BS, the received data is firstly decoded by the channel decoder $g_{\alpha}:\mathbb{R}^{d(x_u)\times 1}\rightarrow\mathbb{R}^{d(z_u)\times 1}$ and then by the semantic decoder $g_{\beta}:\mathbb{R}^{d(z_u)\times 1}\rightarrow\mathbb{R}^{d(s_u)\times 1}$.} The compressed data is decoded via the semantic decoder as follows: 
\begin{align}
    \mathbf{\hat{s}} = g_{\beta}(g_{\alpha}(\hat{x}_u)).
\label{eq:decoder}
\end{align}
After the data is decoded at the BS, it is employed for either AI training (during the AI training process) or AI inference (during the operational process). 
\subsection{Communication Model}
\begin{figure}[t]
\centerline{\includegraphics[width=1\linewidth]{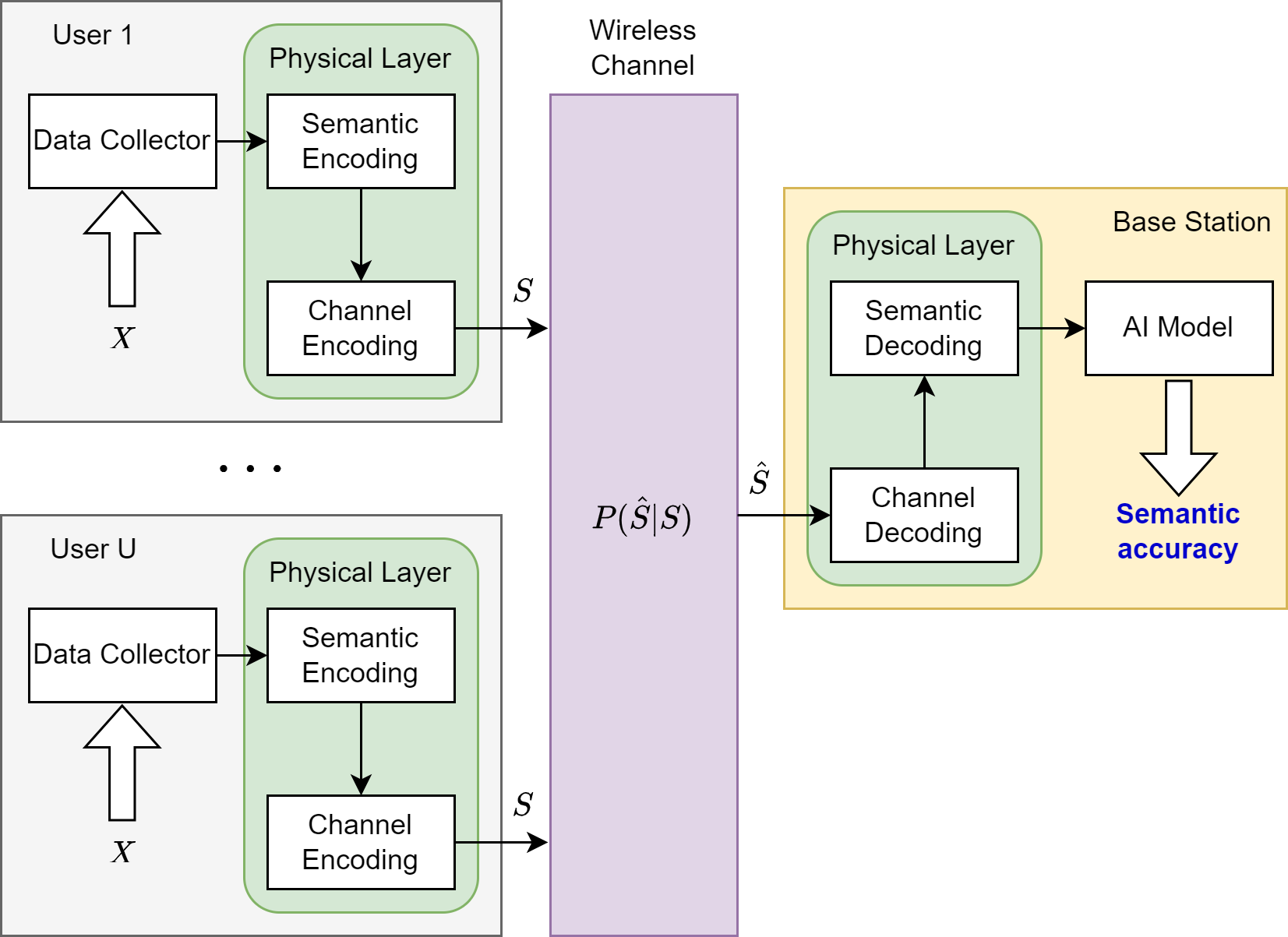}}
\centering
\caption{System model of the considered SemCom.}
\label{fig:Semcom-Architecture}
\end{figure}
The maximum achievable data rate between user $u$ and the BS can be denoted as:
\begin{align}
R (B_u, P_u)
&= B_u \log_{2}\left(1+\frac{h_u P_u}{I_n+ B_u N_0}\right),
\end{align}
where $B_u, P_u$ denote the bandwidth and transmit power of user $u$, respectively. $N_0$ is the noise power spectral density. $h_u=r_u d_u^{-2}$ represents the channel gain, where $d_u, r_u$ are the distance and Rayleigh fading parameter between user $u$ and the BS, respectively.

\subsubsection{Compression Model} As discussed in Section~\ref{sec:system-model}, we consider a semantic encoder/decoder model that compresses the data into compact and structured representations. To assess the performance of the semantic encoder/decoder, we use two main features for each user $u$: the compression ratio and the reconstruction rate, denoted as $o_u$ and $\sigma_{\textrm{sem},u}$, respectively. Regarding the compression ratio, we employ the deep neural network (DNN) for data compression \cite{2023-FL-HCFL}, where the compression ratio $o_u$ can be characterized as the ratio between the dimensionality of the data representation and that of the original data:
\textcolor{duong}{
\begin{align}
    o_u = d(z_u)/d(x_{u}).
\label{eq:compression-ratio}
\end{align}}
To approximate the reconstruction rate, we leverage the MSE between reconstructed data $\hat{x}_u$ and original data $x_u$, i.e., $\textrm{MSE}(x_u, \hat{x}_u) = \frac{1}{N}\sum^N_{i=1}(x_{i,u} - \hat{x}_{i,u})^2$. As discussed in \cite{2023-FL-HCFL}, the DNN can be considered as restricted Boltzmann machine (RBM), and thus, the noise can be modeled by a Gaussian distribution $\mathcal{N}(0,\sigma^2_{\textrm{sem}})$ with mean $0$ and variance $\sigma^2_{\textrm{sem}}$. Consequently, we have the following relationship
\begin{align}
    \sigma_{\textrm{sem},u} = \sigma_{\textrm{sem}} (o_u) &= \sqrt{\frac{1}{N}\sum^N_{i=1}(x_{i,u} - \hat{x}_{i,u})^2} \notag \\
    &= \sqrt{\textrm{MSE}(x_u, \hat{x}_u)},
\end{align}
where the variance $\sigma^2_{\textrm{sem},u}$ can be considered as a non-linear function affected by the compression ratio $o_u$ (i.e., $\sigma^2_{\textrm{sem},u}=f(o_u)$). \textcolor{duong}{Traditionally, the relationship between $\sigma^2_{\textrm{sem},u}$ and $o_u$ is mutually dependent, implying that the distortion rate varies depending on the specific compression technique employed.} Consequently, establishing a universal approximation for $f(o_u)$ is challenging. Nevertheless, it should be noted that the compression technique is chosen before setting up the SemCom system. As a result, we can select the compression ratio from a predefined table (an example of such a table can be found in Section~\ref{sec:semantic-setting}).

\subsubsection{Information Transfer} \textcolor{duong}{In each communication round, users transmit $Z_u$ bits data to the BS for the AI process. We denote $o_u$ as a compression ratio on user $u$. Subsequently, user $u$ transmits an amount of $o_u Z_u$ bits in each communication round. As a result, the transmission time for user $u$ can be calculated as
\begin{align}\label{LKD-surrogateOutput}
T_\textrm{sem}^u (B_u, P_u, o_u, \beta_u) = 
\begin{cases}
    \frac{o_u Z_u}{R(B_u, P_u)}, & \text{if } \beta_u\neq 0, \\
    0, & \text{otherwise}.
\end{cases}
\end{align}
where $\beta_u=0$ indicates that user $u$ is not assigned to send the data to the server, and otherwise. We define the average time transmission as follows: 
\begin{align}
    T_\textrm{avg} = \frac{1}{U} \sum^{U}_{u=1}T_\textrm{sem}^u (B_u, P_u, o_u, \beta_u).
\end{align}
The total transmission energy consumption at each user $u$ for sending data is given by
\begin{align}
    E_\textrm{sem}^u (B_u, P_u, o_u, \beta_u) = P_u T_\textrm{sem}^u (B_u, P_u, o_u, \beta_u).
\end{align}
Hence, the total communication energy of the system can be expressed as
\begin{align}
    E_\textrm{tot} 
    = \sum^U_{u=1} E_\textrm{sem}^u (B_u, P_u, o_u, \beta_u).
\end{align}}
\subsection{Goal-Oriented Deep Learning Model}
Apart from other goal-oriented SemCom research, we consider a general AI system that is capable of multi-task learning (e.g., Natural Language Processing (NLP), Image Processing (IP), Time Series Prediction (TSP)). The objective of the AI task-based SemCom system is to retrieve the output $\mathbf{t}$ with respect to the task's requirement. For instance, the output could encompass an image in the case of image retrieval, a label in the context of image classification, or text for speech recognition, depending on the nature of the task.
\begin{definition}[AI Inference]
    Acording to \cite{2007-DL-GreedyLayerWise}, the output probability $p(\mathbf{t})$ can be expressed as
\begin{align}
    p(\mathbf{t}) = p(\mathbf{t}\vert \mathbf{\hat{s}}) p(\mathbf{\hat{s}}),
\end{align}
where $p(\mathbf{t}\vert \mathbf{\hat{s}})$ is the posterior probability. 
\end{definition}

\begin{definition}[AI Training] 
    Consider the AI training task at the BS, the AI model employs a gradient descent algorithm according to a specific loss function, denoted as $F(w; \cdot)$, which can be utilized with any metric, e.g., MSE: $F(w; x_u) = \Vert \mathbf{t}-\mathbf{\hat{s}} \Vert^2$, Cross Entropy: $F(w; \mathbf{\hat{s}}) = -f\sum_{\mathbf{\hat{s}}\in S} p(\mathbf{t}) \log (\mathbf{\hat{s}})$.     
\end{definition}

\subsubsection{Semantic Distortion} \textcolor{duong}{To analyze the entire online goal-oriented SemCom system \emph{without the straggling effect} (as mentioned in Challenge~\ref{challenge:2}), where the model at the receiver has to queue the received data to execute the AI procedure and obtain the resulting AI metrics), we treat it as a sequential stochastic process.} In this regard, we introduce the following lemma concerning the sequential distortion: 
\begin{lemma}
The total distortion of the consequential Gaussian model for each user $u$ is given by
\begin{align}
\sigma_{\textrm{tot},u}^2 = \sum\nolimits^{L}_{l=1} \sigma^2_{l,u},
\end{align}
where $l$ denotes the index of the distortion generated components in the SemCom system.
\label{lemma:sequential-model-distortion}
\end{lemma}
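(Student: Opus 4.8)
The plan is to treat the semantic communication pipeline as a cascade of $L$ stages, each of which injects an independent additive distortion, and then invoke the stability of the Gaussian family under convolution. First I would make explicit what the ``distortion generated components'' are: from the system model these are the semantic compression stage, whose reconstruction error is modeled as $\mathcal{N}(0,\sigma^2_{\textrm{sem},u})$ via the RBM argument, and the wireless channel, whose additive noise $\mathbf{n}_u \sim \mathcal{C}\mathcal{N}(0,\sigma^2_{\textrm{model},u})$ is i.i.d.\ Gaussian, together with any further perturbation stages indexed by $l=1,\dots,L$. Writing $\mathbf{r}_0 = \mathbf{s}$ for the clean source and $\mathbf{r}_l = \mathbf{r}_{l-1} + \boldsymbol{\epsilon}_{l}$ for the signal after the $l$-th stage, where $\boldsymbol{\epsilon}_l \sim \mathcal{N}(\mathbf{0}, \sigma^2_{l,u}\mathbf{I})$, lets me identify the accumulated distortion as $\mathbf{r}_L - \mathbf{r}_0 = \sum_{l=1}^L \boldsymbol{\epsilon}_l$.

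The core step is then a convolution argument. Assuming the per-stage distortions $\boldsymbol{\epsilon}_1,\dots,\boldsymbol{\epsilon}_L$ are mutually independent, I would show that their sum is again zero-mean Gaussian with variance equal to the sum of the component variances. The cleanest route is via characteristic functions: since $\mathbb{E}[e^{j\omega^\top \boldsymbol{\epsilon}_l}] = \exp(-\tfrac{1}{2}\sigma^2_{l,u}\Vert\omega\Vert^2)$ and independence turns the characteristic function of the sum into a product, we obtain $\mathbb{E}[e^{j\omega^\top(\sum_l \boldsymbol{\epsilon}_l)}] = \exp(-\tfrac{1}{2}(\sum_{l=1}^L \sigma^2_{l,u})\Vert\omega\Vert^2)$, which is exactly the characteristic function of a Gaussian with variance $\sum_{l=1}^L \sigma^2_{l,u}$. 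An equivalent and perhaps more transparent presentation is a short induction on $L$ using the elementary fact that two independent Gaussians add their variances, i.e.\ $\mathcal{N}(0,a)\ast\mathcal{N}(0,b)=\mathcal{N}(0,a+b)$. Either way, reading off the variance of the total distortion gives $\sigma^2_{\textrm{tot},u} = \sum_{l=1}^L \sigma^2_{l,u}$.

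The main obstacle is not the algebra but the two modeling assumptions that make the algebra applicable, and I would devote the bulk of the argument to justifying them. The first is \emph{independence}: the compression error is produced by the learned autoencoder while the channel noise is produced by the physical medium, so treating the $\boldsymbol{\epsilon}_l$ as statistically independent across stages is natural, and I would state it as the operating assumption underlying the ``sequential stochastic process'' viewpoint. The second, more delicate, is \emph{additivity}: the decomposition $\mathbf{r}_l = \mathbf{r}_{l-1}+\boldsymbol{\epsilon}_l$ presumes that each stage perturbs the signal additively rather than, say, multiplicatively, which is exact for the channel term but only approximate for the nonlinear encoder/decoder maps $f_\alpha\circ f_\beta$ and $g_\beta\circ g_\alpha$. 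I would handle this by linearizing each stage around its operating point via a first-order expansion, so that to leading order the propagated noise remains additive and Gaussian; this keeps the variances additive and confines the error to higher-order terms that vanish in the small-distortion regime the system is designed to operate in. Collecting the accumulated variance then yields the claimed identity.
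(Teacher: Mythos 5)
Your proposal follows essentially the same route as the paper: both view the pipeline as a sequential (Markov) cascade of independent additive Gaussian perturbations and reduce the claim to the closure of zero-mean Gaussians under convolution, applied inductively over the $L$ stages. The only differences are cosmetic — you establish the two-stage convolution identity via characteristic functions where the paper completes the square in the convolution integral directly, and you are more explicit than the paper about the independence and additivity (linearization) assumptions needed to make the cascade model apply to the nonlinear encoder/decoder stages.
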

\textit{Proof:} The proof is demonstrated in Appendix~\ref{appendix:sequential-model-distortion}.

Lemma~\ref{lemma:sequential-model-distortion} establishes a connection between a system comprising an arbitrary number of Gaussian models and the total distortion it incurs. As a result, we can formulate a joint distortion function that accommodates various network configurations. In our research, we introduce a straightforward system model for the SemCom system, which is elaborated upon in Section~\ref{sec:system-model}. Thus, we have the following Theorem. 
\begin{theorem}
\label{theorem:sequential-semcom-distortion}
   The total distortion of the SemCom Process on user $u$ is approximated by: 
    \begin{align}
        \sigma^2_{\text{tot},u} =\sigma^2_{\text{sem},u} + \sigma^2_{\text{model},u} +\sigma^2_{\text{data},u}
    \end{align}
\end{theorem}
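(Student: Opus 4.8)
The plan is to obtain Theorem~\ref{theorem:sequential-semcom-distortion} as a direct specialization of Lemma~\ref{lemma:sequential-model-distortion} to the concrete three-stage pipeline of Section~\ref{sec:system-model}. The first step is to trace the SemCom process of Equations~\eqref{eq:encoder}--\eqref{eq:decoder} and enumerate exactly where distortion is injected into the signal. I would identify three such components and therefore set $L=3$: (i) the source/data stage, carrying the inherent data perturbation $\sigma^2_{\text{data},u}$; (ii) the semantic encoder--decoder pair $(f_{\beta},g_{\beta})$, whose compression loss is captured by $\sigma^2_{\text{sem},u}$; and (iii) the wireless channel $\mathbf{y}=\mathbf{h_u}\cdot\mathbf{x}+\mathbf{n_u}$, whose additive noise is $\sigma^2_{\text{model},u}$.

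The second step is to certify that each of these three components satisfies the Gaussian hypothesis required by Lemma~\ref{lemma:sequential-model-distortion}. For the channel this is immediate, since $\mathbf{n_u}\sim\mathcal{C}\mathcal{N}(0,\sigma^2_{\text{model},u})$ by assumption. For the compression stage I would invoke the restricted Boltzmann machine argument already recorded in Section~\ref{sec:system-model}, which models the reconstruction residual $x_u-\hat{x}_u$ as $\mathcal{N}(0,\sigma^2_{\text{sem},u})$ with $\sigma^2_{\text{sem},u}=\text{MSE}(x_u,\hat{x}_u)$. The data stage is treated analogously as an independent Gaussian perturbation of variance $\sigma^2_{\text{data},u}$. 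I would then argue that these three noise sources are physically independent --- the source perturbation, the learned compression residual, and the thermal channel noise arise from distinct mechanisms --- so that all cross-covariances vanish and it is the variances, not the standard deviations, that accumulate. Applying Lemma~\ref{lemma:sequential-model-distortion} with $L=3$ and the labelling above then yields $\sigma^2_{\text{tot},u}=\sigma^2_{\text{sem},u}+\sigma^2_{\text{model},u}+\sigma^2_{\text{data},u}$ at once.

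The main obstacle I anticipate is Gaussianity and additivity across the nonlinear maps $f_{\beta}$ and $g_{\beta}$: a perturbation that is Gaussian at the encoder input need not remain Gaussian after passing through a deep nonlinear network, and likewise the channel noise is reshaped by the decoder $g_{\beta}(g_{\alpha}(\cdot))$ of Equation~\eqref{eq:decoder} before it reaches $\mathbf{\hat{s}}$. This is precisely why the theorem is stated as an approximation rather than an identity. I would handle it by a first-order local linearization: expanding the decoder around the noise-free signal, each small Gaussian perturbation maps to an approximately Gaussian perturbation whose variance is the input variance scaled by the squared Jacobian gain, and absorbing these gain factors into the respective $\sigma^2_{l,u}$ preserves the additive form. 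Verifying that the linearization error is negligible in the intended operating regime --- small per-stage distortion, where the quadratic remainder is dominated by the retained linear term --- is the delicate part and the place where the approximation in the statement does its work.
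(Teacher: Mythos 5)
Your proposal follows essentially the same route as the paper: the paper's proof also treats the pipeline as a three-stage Markov chain (ideal data $\to$ original data $\to$ received data $\to$ decompressed data), invokes the same RBM-based Gaussian model for the compression residual, and applies Lemma~\ref{lemma:sequential-model-distortion} with $L=3$ to conclude $\sigma^2_{\text{tot},u}=\sigma^2_{\text{sem},u}+\sigma^2_{\text{model},u}+\sigma^2_{\text{data},u}$. Your additional discussion of why Gaussianity and additivity survive the nonlinear encoder/decoder maps (via local linearization) is more careful than the paper, which simply asserts the Gaussian hypothesis for each stage without addressing that point.
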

\begin{proof}
    The proof is demonstrated in Appendix~\ref{appendix:sequential-semcom-distortion}.
\end{proof}

Theorem~\ref{theorem:sequential-semcom-distortion} shows us that the entire SemCom system is affected by the three components: 1) semantic distortion (i.e., the noise $\sigma_{\textrm{sem},u}$ induced by the information loss in lossy compression on each user $u$), 2) channel distortion (i.e., the noise $\sigma_{\textrm{model},u}$ induced by the channel model), and 3) data distortion (i.e., the noise $\sigma_{\textrm{data},u}$ induced by the data sampling process).

\subsubsection{Deep Learning Model Degradation}
To analyze the impact of distortion on AI model degradation, we make the following two assumptions regarding the loss function. 
\begin{assumption}[$L$-smooth]
    Function $F$ is $L$-Lipschitz smooth, i.e., $\nabla^2 F \leq L\mathbf{I}$.
\label{assumption:L-Lipschitz}
\end{assumption}
\begin{assumption}[$\mu$-strongly convex]
    Function $F$ is $\mu$-strongly convex, i.e., $\nabla^2 F \geq \mu\mathbf{I}$.
\label{assumption:mu-convex}
\end{assumption}
Here, we denote $\mathbf{I}$ as an identical matrix. We also have the following assumption about the ideal data.
\begin{assumption}
Given the global dataset $\mathcal{D} = \{ (x_i,y_i) \vert y_i \in \{1,2, \dots, C\} \}$, there always exists a canonical data point $\widetilde{x} = \{\widetilde{x}^c \vert c \in \{1,2,\dots, C\}\}$, with respect to data region $c$ which satisfies $\widetilde{x}^c= \lim_{j \rightarrow \infty} \sum^{j}_{i=1} (x_i \vert y_i = c) $.
\label{assumption:dataset-center}
\end{assumption}
These assumptions are widely used to prove the convergence in distributed learning \cite{2023-FL-FedEXP, 2020-FL-FedNova} in particular and in conventional machine learning in general \cite{2020-SGD-TigherTheory-IID}, thus, suitable for evaluating the goal-oriented SemCom system.

\subsubsection{Training phase} The goal-oriented AI model utilizes data collected from all users $u\in U$ for the training process. Therefore, the training data batch is sampled from a data pool, which represents the joint distribution of all distributed domains. To characterize the total distortion of the system with $U$ users, we consider the following lemma. 
\begin{lemma} \label{lemma:joint-distortion-rate}
    Considering that each user $u\in U$ samples data from different domains and the learning data at the server is sampled from a data pool, thus we can consider the joint data distortion at the server as follows:
    \begin{align}
    \sigma^2_\textrm{tot} 
    &= \frac{1}{D} \sum^U_{u=1} D_u \sigma^2_{\textrm{tot},u}, 
    \end{align}
    where $D_u$ and $D=\sum^U_{u=1}D_u$ are the number of data stored on user $u$ and of all users, respectively. $\sigma^2_{\textrm{tot},u}$ is the total distortion on each channel (from user $u$ to the BS).
\end{lemma}

Applying Assumptions~\ref{assumption:L-Lipschitz}, \ref{assumption:mu-convex}, and \ref{assumption:dataset-center}, and Lemma~\ref{lemma:joint-distortion-rate}, we can establish the theorem regarding AI training divergence when the data is distorted due to the wireless SemCom system.
\begin{theorem}[Model training degradation evaluation]\label{theorem:predict-model-variance}
    Considering $\sigma_\textrm{tot}^2$ as the average data distortion to the global ideal data $\widetilde{x}$ in the dataset $\mathcal{D}$, the learning error bound of the distorted data on the hypothesis $h_\theta$ after $N$ training rounds is demonstrated as follows:
\begin{align}
    F(w^n;\hat{x})-F(w^n;x)\leq N\left( \eta^2\frac{L}{2}-\eta \right)(L\sigma_\textrm{tot})^2,
\end{align}
where $\sigma_\textrm{tot}$ represents the expected total distortion of the SemCom system, encompassing all $U$ users. 
\end{theorem}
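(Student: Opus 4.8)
The plan is to build the bound from the standard descent inequality for $L$-smooth objectives and then charge the entire discrepancy between the clean and distorted training to the gradient mismatch caused by the data perturbation. First I would fix a single training round $n$ and write the gradient-descent update $w^{n+1} = w^n - \eta \nabla F(w^n;\cdot)$. Applying Assumption~\ref{assumption:L-Lipschitz} through its equivalent descent lemma yields the one-step estimate
\[
F(w^{n+1};\cdot) - F(w^n;\cdot) \le \left(\eta^2\frac{L}{2}-\eta\right)\Vert\nabla F(w^n;\cdot)\Vert^2,
\]
which is exactly where the coefficient $\left(\eta^2\frac{L}{2}-\eta\right)$ in the claimed bound originates.

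Next I would isolate the effect of the distortion. Treating $F$ as a function of its data argument and reading the $L$-smoothness as Lipschitz continuity of the gradient in the data, together with Assumption~\ref{assumption:dataset-center}, which supplies the canonical reference point $\widetilde{x}$, I would bound the gradient gap as $\Vert\nabla F(w^n;\hat{x}) - \nabla F(w^n;x)\Vert \le L\Vert\hat{x} - x\Vert$. Taking expectations and substituting the distortion budget accumulated through Theorem~\ref{theorem:sequential-semcom-distortion} and Lemma~\ref{lemma:joint-distortion-rate} gives $\mathbb{E}\,\Vert\hat{x}-x\Vert \le \sigma_{\textrm{tot}}$, and hence $\Vert\nabla F(w^n;\hat{x}) - \nabla F(w^n;x)\Vert^2 \le (L\sigma_{\textrm{tot}})^2$. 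This delivers the second factor $(L\sigma_{\textrm{tot}})^2$ appearing in the statement.

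With these two ingredients in hand I would combine them per round: the gap between evaluating the loss on distorted data versus clean data is controlled by the descent coefficient multiplied by the squared gradient mismatch $(L\sigma_{\textrm{tot}})^2$. Finally I would telescope across the $N$ rounds of training; since each round contributes the same worst-case increment, the aggregate picks up the factor $N$, producing $N\left(\eta^2\frac{L}{2}-\eta\right)(L\sigma_{\textrm{tot}})^2$.

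The hard part will be the bookkeeping that makes the per-round charge legitimate: I must be careful about whether $w^n$ denotes a single trajectory evaluated on two datasets or two separate trajectories, and I must ensure the telescoping does not secretly require the gradient \emph{norms} themselves, rather than their \emph{mismatch}, to stay uniformly bounded. I also expect $\mu$-strong convexity (Assumption~\ref{assumption:mu-convex}) to be needed to confine the iterates to a region where the Lipschitz-in-data estimate and the canonical-point anchoring remain valid; pinning down exactly where it enters, and reconciling the (negative) sign of the coefficient with the ``error bound'' reading of the left-hand side, is the most delicate step of the argument.
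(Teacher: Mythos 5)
Your skeleton matches the paper's: the paper also starts from the descent lemma $F(w^{n+1}) \le F(w^n) + \left(\eta^2\frac{L}{2}-\eta\right)\Vert\nabla F(w^n)\Vert^2$, converts the data perturbation into a gradient perturbation via $L$-smoothness read in the data argument ($\nabla F(w^n;\hat{x}) \le \nabla F(w^n;x) + L\sigma$), and telescopes from $F(w^0;\hat{x}) = F(w^0;x)$ to pick up the factor $N$. The step your plan leaves open --- how to charge the per-round gap to the gradient \emph{mismatch} alone without the gradient \emph{norms} entering --- is exactly where the paper does something you did not anticipate: it upper-bounds the distorted-data step with the $L$-smooth descent lemma, \emph{lower}-bounds the clean-data step with the $\mu$-strongly-convex analogue $F(w^{n+1};x) \ge F(w^n;x) + \left(\eta^2\frac{\mu}{2}-\eta\right)\Vert\nabla F(w^n;x)\Vert^2$, subtracts, and is left with a residual $\frac{\eta^2}{2}\left(L-\mu\right)\Vert\nabla F(w^n;x)\Vert^2$ that it eliminates by \emph{choosing} $L=\mu$. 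So $\mu$-strong convexity enters precisely to cancel the gradient-norm terms, not to confine the iterates as you guessed.

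Two further points of comparison. First, your route to the $(L\sigma_{\textrm{tot}})^2$ factor --- bounding $\Vert\nabla F(w^n;\hat{x})-\nabla F(w^n;x)\Vert^2 \le (L\sigma_{\textrm{tot}})^2$ directly --- is cleaner than the paper's, which instead writes $\Vert\nabla F(w^n;x)+L\sigma\Vert^2 \le \Vert\nabla F(w^n;x)\Vert^2 + \Vert L\sigma\Vert^2$, silently discarding the cross term $2\langle\nabla F(w^n;x), L\sigma\rangle$, which is not sign-definite. But your version then runs into the obstacle you yourself flag: the difference of the two one-step decrements involves $\Vert\nabla F(w^n;\hat{x})\Vert^2 - \Vert\nabla F(w^n;x)\Vert^2$, which is \emph{not} controlled by the squared mismatch $\Vert\nabla F(w^n;\hat{x})-\nabla F(w^n;x)\Vert^2$ unless $\Vert\nabla F(w^n;\hat{x})+\nabla F(w^n;x)\Vert$ is also bounded, so the per-round charge as you describe it does not go through as stated. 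Second, your worry about the sign of $\left(\eta^2\frac{L}{2}-\eta\right)$ is well founded and is not resolved by the paper either: for $\eta < 2/L$ the coefficient is negative, so multiplying an upper bound on a squared norm by it reverses the inequality, and the claimed right-hand side is nonpositive while the left-hand side need not be. In short, you have reproduced the paper's strategy and correctly located its weak joints; closing them would require either a uniform bound on $\Vert\nabla F(w^n;x)\Vert$ along the clean trajectory or an explicit treatment of the step-size regime governing the coefficient's sign.
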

\textit{Proof:} The proof is demonstrated in Appendix~\ref{appendix:predict-model-variance}.

\textcolor{duong}{Theorem~\ref{theorem:predict-model-variance} assists us in estimating the degradation in AI training performance caused by noisy data induced by semantic noise. By utilizing data distortion as a proxy for approximating AI performance, we can anticipate AI performance before the completion of data transmission (because the optimization problems do not directly rely on AI metrics, i.e., accuracy or semantic similarity). Consequently, it lays the foundation for optimizing DNN in the context of goal-oriented SemCom.}

\subsubsection{Inference phase} In the inference phase, we can only have access to a limited amount of data (e.g., a single data point). Consequently, we can not use statistical measurements to evaluate the generalization gap, as is done during the training phase. To assess the inference capacity of the AI model under a noisy channel, we adopt the following lemma.
\begin{lemma}[Total variation of distorted data]
    Given the data $x$ and the noisy data $\hat{x}$ induced by the noisy channel, the total variation between the distributions of two data can be computed as follows: 
    \begin{align}
        \textrm{TV}(x, \hat{x}) \leq \frac{1}{\sqrt{2\pi}\sigma} \exp{-\left(\frac{W}{2\sigma}\right)^2},
    \end{align}
    where $W$ is the decision boundary of data changes that can preserve the goal-oriented AI task, $\sigma$ denotes the distortion between the original data and the noisy data. Thus, we can measure the distribution shift between the two data based on the degree of distortion of data components that cross the decision boundary.
\label{lemma:total-variation}
\end{lemma}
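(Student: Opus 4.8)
The plan is to bound the total variation distance between the clean data $x$ and the noisy data $\hat x$ by modelling the noise as additive Gaussian with standard deviation $\sigma$, consistent with the Gaussian distortion framework adopted throughout the paper (Theorem~\ref{theorem:sequential-semcom-distortion} and Lemma~\ref{lemma:joint-distortion-rate}). The central observation is that the total variation distance $\mathrm{TV}(x,\hat x)$ should be read not as a distance between two fixed points, but as the probability mass of the noise distribution that carries a data component across the decision boundary $W$ that separates goal-preserving perturbations from goal-altering ones. First I would formalise $\hat x = x + \epsilon$ with $\epsilon \sim \mathcal{N}(0,\sigma^2)$, so that the event ``the noise changes the AI task outcome'' corresponds to the noise realisation exceeding the margin $W$ to the decision boundary.

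The key steps, in order, are as follows. First, express $\mathrm{TV}(x,\hat x)$ via the standard identity $\mathrm{TV} = \tfrac12\int |p(\cdot) - q(\cdot)|$, or equivalently as the supremum over measurable events of the probability difference, and argue that the only event relevant to the goal-oriented task is the crossing of the boundary at distance $W$. Second, I would reduce the problem to a one-dimensional tail statement: the probability that a centred Gaussian with variance $\sigma^2$ exceeds $W$ in the critical direction. Third, rather than integrating the full Gaussian tail $\int_W^\infty \frac{1}{\sqrt{2\pi}\sigma}e^{-t^2/(2\sigma^2)}\,dt$, I would bound it by the value of the density at the boundary point scaled appropriately, which yields exactly the claimed form $\frac{1}{\sqrt{2\pi}\sigma}\exp\!\left(-\left(\frac{W}{2\sigma}\right)^2\right)$. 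The factor of $2$ inside the exponent suggests the authors use a convexity or midpoint bound (for instance, evaluating the overlap of the two Gaussians centred at $x$ and at a boundary-shifted point, where the Bhattacharyya-type overlap of two Gaussians separated by $W$ produces precisely the $\exp(-(W/2\sigma)^2)$ factor).

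The main obstacle I anticipate is reconciling the exact constant, specifically the appearance of $W/(2\sigma)$ rather than $W/\sigma$ in the exponent. This strongly hints that the bound arises from the overlap integral of two Gaussians whose means are separated by $W$: the affinity (or $1$ minus the $\mathrm{TV}$ for two unit-variance-type Gaussians) is governed by $\exp(-d^2/8)$ with $d = W/\sigma$, and the prefactor $\frac{1}{\sqrt{2\pi}\sigma}$ comes from bounding the integrand by its peak density. I would therefore carefully set up which two distributions are being compared, confirm whether the $\mathrm{TV}$ here is between the clean-data distribution and its boundary-reflected counterpart, and verify that the Gaussian tail or overlap bound produces the stated prefactor and exponent without a stray factor of $\tfrac12$. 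The remaining steps, namely the Gaussian tail or overlap estimate and the substitution of $\sigma = \sigma_{\textrm{tot}}$ from Lemma~\ref{lemma:joint-distortion-rate}, are routine once the correct two-distribution comparison is fixed.
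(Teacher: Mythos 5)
Your route is essentially the paper's: the paper also reads $\mathrm{TV}(x,\hat x)$ as the probability that additive Gaussian noise pushes a data component across the decision boundary $W$, reduces this to a one-sided tail probability $p(v \ge W)$ for a centred Gaussian with variance $\sigma^2 = \mathrm{MSE}(x,\hat x)$, and then bounds the tail integral by the value of the integrand at $W$. The one structural element you omit is the paper's intermediate decomposition of the total variation into a component-wise sum $\sum_i\sum_b \vert p(x_{i,b}) - p(\hat x_{i,b})\vert$ over bit indices (following the RBM view of the data cited from \cite{2007-DL-GreedyLayerWise}), which is what licenses treating the problem as a collection of one-dimensional boundary-crossing events; this is a presentational difference rather than a different argument.

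Two of the obstacles you anticipate deserve precise answers. First, the $W/(2\sigma)$ in the exponent does not arise from a Bhattacharyya-type overlap of two shifted Gaussians: in Appendix~\ref{app:total-variation} it appears only because the authors write the noise density as $\frac{1}{\sqrt{2\pi}\sigma}\exp\left[-\left(\frac{t}{2\sigma}\right)^2\right]$ rather than the standard $\frac{1}{\sqrt{2\pi}\sigma}\exp\left[-\frac{t^2}{2\sigma^2}\right]$, so the factor of two is baked into a non-standard normalisation of the density, not produced by a midpoint or affinity argument. Second, your worry about the last step is justified and applies equally to the paper's own proof: the inequality $\int_W^\infty \frac{1}{\sqrt{2\pi}\sigma}\exp\left[-\left(\frac{t}{2\sigma}\right)^2\right] dt \le \frac{1}{\sqrt{2\pi}\sigma}\exp\left[-\left(\frac{W}{2\sigma}\right)^2\right]$ is not valid as written (the left side carries an extra length dimension; a correct Gaussian tail bound would introduce a factor such as $\sigma^2/W$, or replace the prefactor by $\tfrac12$ via the Chernoff bound $Q(W/\sigma)\le \tfrac12 e^{-W^2/(2\sigma^2)}$). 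So your plan faithfully reproduces the paper's argument, including its unsound final inequality; making the lemma rigorous requires substituting a genuine tail estimate there, which changes the stated constant.
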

\textcolor{duong}{For a more intuitive understanding, consider a basic task such as identifying if there is a person in a Figure~\ref{fig:gauss3}, AI model can adapt to notably distorted images without difficulty. However, for a challenging task (e.g., determining the identity of the person), AI model needs data with clearer details. In case of image with significant distortions (as depicted in Figure~\ref{fig:gauss3}), AI model may struggle to recognize the person accurately. This demonstrates that the complexity of the task influences the extent to which the AI can manage image distortions. Therefore, we have the following theorem.}
\begin{theorem}[Model inference degradation evaluation]
\label{theorem:inference-divergence}
    The inference gap of the goal-oriented AI model under the noisy channel can be represented as: 
    \begin{align}
        \mathbb{E}\left[p(\mathbf{\hat{t}}) - p(\mathbf{t})\right]
        \leq \frac{1}{\sqrt{2\pi}\sigma} p(\mathbf{t}\vert s_u) \exp{-\left(\frac{W}{2\sigma}\right)^2}
    \end{align}
\end{theorem}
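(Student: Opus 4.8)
The plan is to reduce the inference gap to a distribution-shift quantity that Lemma~\ref{lemma:total-variation} already controls. First I would invoke the factorization of the output probability from the AI inference model, writing the clean prediction as $p(\mathbf{t}) = p(\mathbf{t}\vert\mathbf{s})\,p(\mathbf{s})$ and the distorted prediction as $p(\mathbf{\hat{t}}) = p(\mathbf{t}\vert\mathbf{\hat{s}})\,p(\mathbf{\hat{s}})$, where $p(\mathbf{t}\vert\cdot)$ is the posterior induced by the fixed, already-trained AI model. The difference $p(\mathbf{\hat{t}}) - p(\mathbf{t})$ then splits into a term driven by the shift in the input marginal, $p(\mathbf{\hat{s}}) - p(\mathbf{s})$, and a term driven by the change of the posterior across the perturbation, $p(\mathbf{t}\vert\mathbf{\hat{s}}) - p(\mathbf{t}\vert\mathbf{s})$.

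The key modeling step is to observe that $W$ in Lemma~\ref{lemma:total-variation} is precisely the decision boundary inside which the goal-oriented task is preserved: as long as the distortion keeps $\mathbf{\hat{s}}$ within this region, the model assigns the same output, so the posterior mapping is to first order unchanged and the second term is negligible relative to the first. Discarding it, I would factor out the posterior and take the expectation, leaving $\mathbb{E}[p(\mathbf{\hat{t}}) - p(\mathbf{t})] \le p(\mathbf{t}\vert\mathbf{s})\,\mathbb{E}[p(\mathbf{\hat{s}}) - p(\mathbf{s})]$.

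Next I would identify the remaining factor $\mathbb{E}[p(\mathbf{\hat{s}}) - p(\mathbf{s})]$ with the total variation distance $\textrm{TV}(\mathbf{s},\mathbf{\hat{s}})$. Since a posterior is bounded in $[0,1]$, the standard variational characterization of total variation --- that $|\mathbb{E}_P[f] - \mathbb{E}_Q[f]| \le \textrm{TV}(P,Q)$ for any $f$ taking values in $[0,1]$ --- supplies exactly this bound. Substituting the Gaussian-tail estimate from Lemma~\ref{lemma:total-variation}, namely $\textrm{TV}(\mathbf{s},\mathbf{\hat{s}}) \le \frac{1}{\sqrt{2\pi}\sigma}\exp{-\left(\frac{W}{2\sigma}\right)^2}$, then yields the claimed inequality directly.

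The main obstacle I anticipate is the rigorous justification of the second paragraph: showing that the posterior-change term may be neglected, which amounts to arguing that the trained model is locally stable on the decision region bounded by $W$. A clean way to handle this would be to impose a local-constancy (or Lipschitz) condition on $p(\mathbf{t}\vert\cdot)$ for distortions that do not cross the boundary $W$, so that the posterior term contributes only a higher-order correction; otherwise the inequality holds as an approximation, consistent with the intuitive discussion preceding the theorem about task complexity governing how much distortion the model can absorb.
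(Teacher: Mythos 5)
Your proposal follows essentially the same route as the paper: factorize the output probability, neutralize the posterior-change term, decompose $p(\mathbf{\hat{s}})$ as $p(\mathbf{s}) + \bigl(p(\mathbf{\hat{s}}) - p(\mathbf{s})\bigr)$, and bound the marginal shift by the Gaussian-tail total-variation estimate of Lemma~\ref{lemma:total-variation}. The only difference is that the paper simply asserts $p(\mathbf{t}\vert\mathbf{\hat{s}}) = p(\mathbf{t}\vert\mathbf{s})$ as an equality in its first line, whereas you correctly flag this posterior-invariance step as the key unproven modeling assumption and propose a local-constancy condition to justify it --- a point on which your treatment is more careful than the paper's.
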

\textit{Proof:} The proof is demonstrated in Appendix~\ref{appendix:inference-divergence}.

Through the utilization of Theorem~\ref{theorem:inference-divergence}, we gain the capability to predict the degradation of AI model performance resulting from input data distortion in comparison to the original data. The salient aspect of Theorem~\ref{theorem:inference-divergence} lies in its ability to gauge the deterioration of the AI model on a per-data-point basis. Consequently, during the online network optimization (while the AI task is in progress), there is no longer a need to collect a substantial quantity of data samples to approximate the shift in data distribution (as required in the case of Theorem~\ref{theorem:predict-model-variance}). Moreover, by anticipating how data distortion affects AI performance, we can circumvent the need for employing existing semantic metrics (e.g., semantic similarity, semantic relatedness) which demand AI inference for evaluation. As a result, we can alleviate the straggling challenges that afflict modern research in goal-oriented SemCom.
\begin{table}[]
\centering
\caption{Table of abbreviations and notations.}
\begin{tabular}{|c|l|l}
\cline{1-2}
\textbf{Notation} & \multicolumn{1}{c|}{\textbf{Description}} & \\ \cline{1-2}
$U$      & Number of users                                    & \\ \cline{1-2}
$B_u$    & Bandwidth of user $u$                              & \\ \cline{1-2}
$o_u$    & Compression ratio of user $u$             & \\ \cline{1-2}
$D$      & Number of data stored on server's data pool        & \\ \cline{1-2}
$D_u$    & Number of data of user $u$ stored on server        & \\ \cline{1-2}
$B_\textrm{max}$    & Maximum total bandwidth                 & \\ \cline{1-2}
$P_\textrm{max}$    & Maximum total transmit power            & \\ \cline{1-2}
$F(w;\cdot)$        & Goal-oriented AI loss function          & \\ \cline{1-2}
$d(\cdot)$        & Data size                        & \\ \cline{1-2}
$\beta_u$    & User selection vector                 & \\ \cline{1-2}
$s_u$        & Source data on user $u$               & \\ \cline{1-2}
$\hat{s}_u$  & Reconstructed data on user $u$        & \\ \cline{1-2}
$x_u$       & Encoded data on user $u$               & \\ \cline{1-2}
$\hat{x}_u$  & Received encoded data on user $u$     & \\ \cline{1-2}
\end{tabular}
\label{tab:notations}
\end{table}
\section{Problem Formulation}
\label{sec:problem-formulation}
Our goal is to create a data distortion-driven goal-oriented SemCom system that achieves two primary objectives: 1) preserving consistent semantic metrics (Challenge~\ref{challenge:1}) and 2) mitigating the straggling issue (Challenge~\ref{challenge:2}) in optimization of SemCom system. To achieve this, we establish two optimization problems that focus on optimizing the SemCom system during the AI training and AI inference.

\subsection{Optimization Problem for Semantic AI Training/Inference}
\label{sec:sem-AI-trainer}
\textcolor{duong}{In data distortion-driven goal-oriented SemCom system, we consider the learning and predicting performance of the AI model. The principle of our method is the minimization of total communication energy while simultaneously preserving the AI learning or inference performance. To this end, we set an upper bound on the received data distortion at the BS, ensuring that the AI tasks do not deviate beyond a certain predefined threshold. Specifically, the optimization problem is given as.}
\begin{subequations}
	\label{subeqn-opt-pro-training:opt-pro-main}
	\begin{alignat} {3}
		& \min_{\mathbf{B}, \mathbf{P}, \mathbf{o}, \mathbf{\beta}} 
		&	 &  ~E_\textrm{tot}(\mathbf{B}, \mathbf{P}, \mathbf{o}, \mathbf{\beta})=\sum^{U}_u E_\textrm{sem}^u (B_u, P_u, o_u, \beta_u),
		\label{opt-pro-training:opt-pro} \\
		& ~~\text{s.t.}
		&   &   B_u \geq 0, \forall u \in U,
		\label{subeqn-opt-pro-training:min-bandwidth-constraint}\\
		&   &   & \sum^U_{u=1} \beta_u B_u \leq B_\textrm{max}, \forall u \in U,
		\label{subeqn-opt-pro-training:max-bandwidth-constraint}\\
		&   &   & 0 \leq P_u \leq P_\textrm{max}, \forall u \in U,
		\label{subeqn-opt-pro-training:min-power-constraint}\\		
		&   &   & 0 < o_u < 1,
		\label{subeqn-opt-pro-training:compression-ratio}\\	
		&   &   & \sigma_{\textrm{sem},u} = f(o_u),
		\label{subeqn-opt-pro-training:compression-distort}\\	
		&   &   & \beta_u \in \{0,1\},
		\label{subeqn-opt-pro-training:resource-allocation}\\
		&   &   &  f_\textrm{AI}(\sigma_{\textrm{tot}}) < \varepsilon_\textrm{target}.
		\label{subeqn-opt-pro-training:goal-oriented}
    \end{alignat}
\end{subequations}
where $\sigma_{\textrm{sem},u}$ is the distortion induced by the communication link from user $u$ to the BS. Here, we denote the optimization vectors of power allocation, bandwidth allocation, compression ratio, user selection as $\mathbf{\beta} = \{ \beta_1, \beta_2, \ldots \beta_U\}$, $\mathbf{P} = \{P_1, P_2, \ldots P_U\}$, $\mathbf{o} = \{ o_1, o_2, \ldots, o_U\}$, $\mathbf{B} = \{B_1, B_2, \ldots, B_U\}$, respectively. Constraint~\eqref{subeqn-opt-pro-training:max-bandwidth-constraint} indicates that the bandwidth allocated to chosen users must not exceed the system bandwidth. Constraint~\eqref{subeqn-opt-pro-training:min-bandwidth-constraint} are the minimum bandwidth. Constraints \eqref{subeqn-opt-pro-training:min-power-constraint} reveals the minimum and maximum power constraints, respectively. Constraint~\eqref{subeqn-opt-pro-training:compression-ratio} is the compression ratio constraint which is bounded by $1$ (indicating data without compression). Constraint~\eqref{subeqn-opt-pro-training:compression-distort} expresses relationship between the semantic distortion and the compression ratio of a specific compression algorithm. This relationship can be approximated using a function or a mapping table. Notably, we deploy and evaluate the compression algorithm before deploying the DRGO system. Constraint~\eqref{subeqn-opt-pro-training:goal-oriented} is the distortion resilience constraint. Specifically, it ensures that the total data distortion remains below a specified threshold corresponding to achieving a minimal accuracy of the task predictor, $\varepsilon_\textrm{target}$. The $f_\textrm{AI}(\sigma_{\textrm{tot}})$ is the accuracy approximation function based on Theorems~\ref{theorem:predict-model-variance} and \ref{theorem:inference-divergence}, and is defined as follow: 
\begin{align}\label{eq:AI-approx}
f_\textrm{AI}(\sigma_{\textrm{tot}}, k) = 
\begin{cases}
    \left( \eta^2\frac{L}{2}-\eta \right)(L\sigma_\textrm{tot})^2, & \text{if $k=0$} \\
    \frac{1}{\sqrt{2\pi}\sigma} p(\mathbf{t}\vert s_u) \exp{-\left(\frac{W}{2\sigma_\textrm{tot}}\right)^2}, & \text{if $k=1$},
\end{cases}
\end{align}
where the condition $k=0$ means that the optimization problem is applied to the AI training process, whereas $k=1$ indicates that the optimization is applied to the AI inference process. The AI accuracy function defined in \eqref{eq:AI-approx} is approximated through averaging over $N$ communication rounds. Consequently, apart from the Theorem~\ref{theorem:predict-model-variance}, we calculate the average over each communication round of the AI training approximation function, yielding: 
\begin{align}
    f_\textrm{AI}(\sigma_{\textrm{tot}}, k) &= \frac{1}{N} \Big[N\left( \eta^2\frac{L}{2}-\eta \right)(L\sigma_\textrm{tot})^2\Big] \notag\\
    &= \left( \eta^2\frac{L}{2}-\eta \right)(L\sigma_\textrm{tot})^2.
\end{align}
The main notations of this paper are summarized in Table~\ref{tab:notations}.

\section{DDPG-EI: Deep Reinforcement Learning Approach for DRGO}
\label{sec:drl-approach}
In this section, we briefly introduce the structures and process of our proposed DRL algorithm. To employ DRL to solve the minimization problem, it is necessary to redesign the optimization problem, ensuring that the algorithm process is properly aligned with the DRL operating rule. \textcolor{duong}{In Section~\ref{sec:DDPG}, we introduce the architecture of Deep Deterministic Policy Gradient (DDPG) \cite{2014-RL-DDPG}. In Section~\ref{sec:closed-form}, we first introduce a close-from expression for optimization constraints. From then on, we can re-design the DDPG into new algorithm, namely DDPG-EI, which shows the significant robustness over DDPG.}
\subsection{DDPG Architecture}\label{sec:DDPG}
\subsubsection{An Overview of DDPG}
DDPG is a widely used technique in DRL. It employs four DNNs, which are described as below.
\begin{itemize}
\item The actor network, also known as the policy network, is parameterized by $\phi_\mu$. It takes the state $s$ as input and produces an action denoted by $\pi(s\vert \phi_\mu)$.
\item The target actor network, parameterized by $\phi_{\mu_t}$, generates the target policy $\pi(s\vert \phi_{\mu_t})$.
\item The critic network, also referred to as the $Q$ network, is parameterized by $\phi_q$. It takes both the state $s$ and the action $a$ as inputs and outputs the corresponding action-value function $Q(s,a\vert \phi_q)$.
\item The target critic network, parameterized by $\phi_{q_t}$, produces the target action-value function $Q(s,a\vert \phi_{q_t})$.
\end{itemize}
\subsubsection{Training of the Critic Network}  
The batch of observations is sampled from the experience replay buffer $D$. The target value $Q$ of the $k^\textrm{th}$ observation $(s_k, a_k, r_k, s_{k+1})$ is represented by $y_k$ as follows:
\begin{align}
y_k = r_k + Q(s_{k+1}, a_{k+1}\vert \phi_q),
\end{align}
where $a_{k+1}$ is the action sampled by the DRGO network at step $k+1$ via the policy $a_{k+1}\sim\pi(s_{k+1}\vert \phi_\mu)$ under the influence of $s_{k+1}$ through the target actor network. The subscript $Q(s_{k+1}, a_{k+1}\vert \phi_q)$ defines the output of the target critic network. We obtain the critic network's optimal parameters $\phi_q^*$ by minimizing the MSE between estimated $Q$ value of the critic network $Q(s_k, a_k\vert \phi_q)$ and the $Q$ target $y_k$
\begin{subequations}
\label{main-critic}
	\begin{alignat} {3}
		& \phi_q^* = 
		&	 & \arg\min_{\phi_q}~\mathcal{L}(\phi_q), \\
		& \text{s.t.}
		&   &   \mathcal{L}\left(\phi_q\right)=\frac{1}{N_B} \sum_{k=1}^{N_B}\left(y_k-Q\left(s_k, a_k \vert \phi_q\right)\right)^2. \notag 
    \end{alignat}
\end{subequations}
\subsubsection{Training the Actor Network}
To improve the policy obtained from the proposed algorithm, the parameters of the actor network  must be updated so that the action output from the network is in the direction of increasing the Q value, which is the output from the critic network. By using the gradient descent algorithm, the optimal parameters of the actor network $\phi_\mu^*$ can be updated via the following optimization function.
\begin{subequations}
\label{main-actor}
	\begin{alignat} {3}
		& \phi_{\mu}^* = 
		&	 & \arg\max_{\phi_\mu}~\mathcal{L}(\phi_{\mu}), \\
		& \text{s.t.}
		&   &   \mathcal{L}\left(\phi_\mu\right)=\frac{1}{N_B} \sum_{k=1}^{N_B} Q\left(s_k, \pi(s_{k}\vert \phi_\mu) \vert \phi_q\right). \notag
    \end{alignat}
\end{subequations}

\subsubsection{Slow Update of Critic and Actor Networks} 
Continuous updates of the critic and actor networks are necessary in DRL to enhance system performance. \textcolor{duong}{However, excessively frequent updates can result in unstable and erratic fluctuations of the model parameters. This is because the state observations $s_k$ vary widely with each episode $k$ (due to the different environment initialization at the start of every episode), causing the learned trajectory to diverge significantly from those in other episodes. \cite{2016-DRL-FastSlow}}. This instability can lead to overfitting, restricting the model's ability to generalize to novel scenarios beyond the training dataset. 

To address the aforementioned issues, the target actor and critic networks are employed to update the actor and critic networks at a slower pace. This approach ensures stability and maintains a balance between exploration and exploitation throughout the learning process. The critic is accompanied by a slower updated target critic. The target critic is a replica of the critic, but its parameters $\phi_{q_t}$ are updated gradually using a weighted average of the critic's parameters $\phi_q$. Specifically, parameters set of the target critic is computed by taking a fraction (commonly denoted as $\kappa$, with values like $0.001$ or $0.005$) of its current value and adding $(1-\kappa)$ times the new value from the main actor network. The parameters of the target critic network are updated according to the following function:
\begin{align}
    \phi_{q_t}=\kappa \phi_q+(1-\kappa) \phi_{q_t}, \quad 0<\kappa< 1.
\end{align}
Similar to the slow update for target critic, we implement slow updates in the target actor. This technique helps minimize anomalous updates, prevents overfitting, and ultimately improves the overall learning process of the system. The parameters of the target actor are updated as follows:
\begin{align}
\phi_{\mu_t}=\kappa \phi_\mu+(1-\kappa) \phi_{\mu_t}, \quad 0<\kappa < 1.
\end{align}
\subsubsection{OU Noise} 
From the user's perspective, the actor network is of paramount importance as it provides the desired solution. To encourage exploration of the environment, the output of the actor network is subject to the introduction of noise. The DDPG algorithm employs OU Noise \cite{OUNoise} for this purpose due to its two key characteristics: (1) it enables intensive exploration during the initial phase of DRL training, and (2) gradually transitions towards exploitation (i.e., reducing the noise) in the later stages of training.      
\subsubsection{Experienced Replay Buffer} 
In order for the DRL agent based on the DDPG algorithm to learn from the interaction with the environment, an experience replay $D$ is constructed. At time $t$, the DRL agent takes action $a_t$ under the influence of $s_t$, then receives a reward $r_t$ and is moved to the next state. Therefore, the transition tuple ($s_t$, $a_t$, $r_t$, $s_{t+1}$) is saved to experience replay $D$.

\subsection{Closed-form Expression for Optimization Problem} \label{sec:closed-form}
\textcolor{duong}{To reduce the complexity of the optimization problem~\eqref{opt-pro-training:opt-pro} when applying in DDPG}, our target is to find the closed form of the optimization problems. To achieve this, we classify the constraints in the optimization problem into two categories, subbed \emph{explicit constraints} and \emph{ambiguous constraints}. 
\subsubsection{Explicit Constraints}
\label{sec:explicit-constraint}
The \emph{explicit constraints} refers to constraints that are easily configurable. These constraints mostly focus on the limits of system variables (e.g., power, bandwidth). Taking advantage of the outputs of a deep model can lead to substantial improvements in these components without incurring significant computational costs. More specifically, we utilize activation functions to control the output of the training model, ensuring that the control variables will satisfy the constraints. (i) To restrict the variables to the specific range (e.g., $0\leq P \leq P_\textrm{max}$), we use the sigmoid function. Subsequently, the action is normalized to the appropriate range using the normalization function. (ii) To impose a limit on the value while ensuring that the variables always sum to a value lower than a predetermined threshold (e.g., $0\leq \sum_u B_u \leq B$), we make use of the softmax function. Following that, we employ the normalization process, which is similar to the one described in (i). 

This approach can simplify the loss function and improve the performance of the DRL algorithm. Instead of incorporating constraints directly into the reward function, the constraints are hard-fixed, automatically restricting variables to the specific bounds. This simplifies the loss function, avoiding the complexities introduced by constraints, and avoiding the stronger non-convexity of the loss function. Moreover, this approach enhances the model's adaptability, allowing it to quickly learn and adapt to new constraints. By reducing the computational complexity and increasing the adaptability of the system, we can optimize performance without compromising the result quality. Furthermore, this approach can greatly enhance the performance of the DRL algorithm by allowing the model to learn and adapt to new constraints quickly.
\subsubsection{Ambiguous Constraints}
Contrary to explicit constraints, the \emph{ambiguous constraints} are more challenging to configure and cannot be simply addressed by normalizing the model's output. We use the Lagrange approach \cite{2004-CVX-Optimization} to resolve this issue.
\begin{figure}[t]
\centering
\includegraphics[width =1\linewidth]{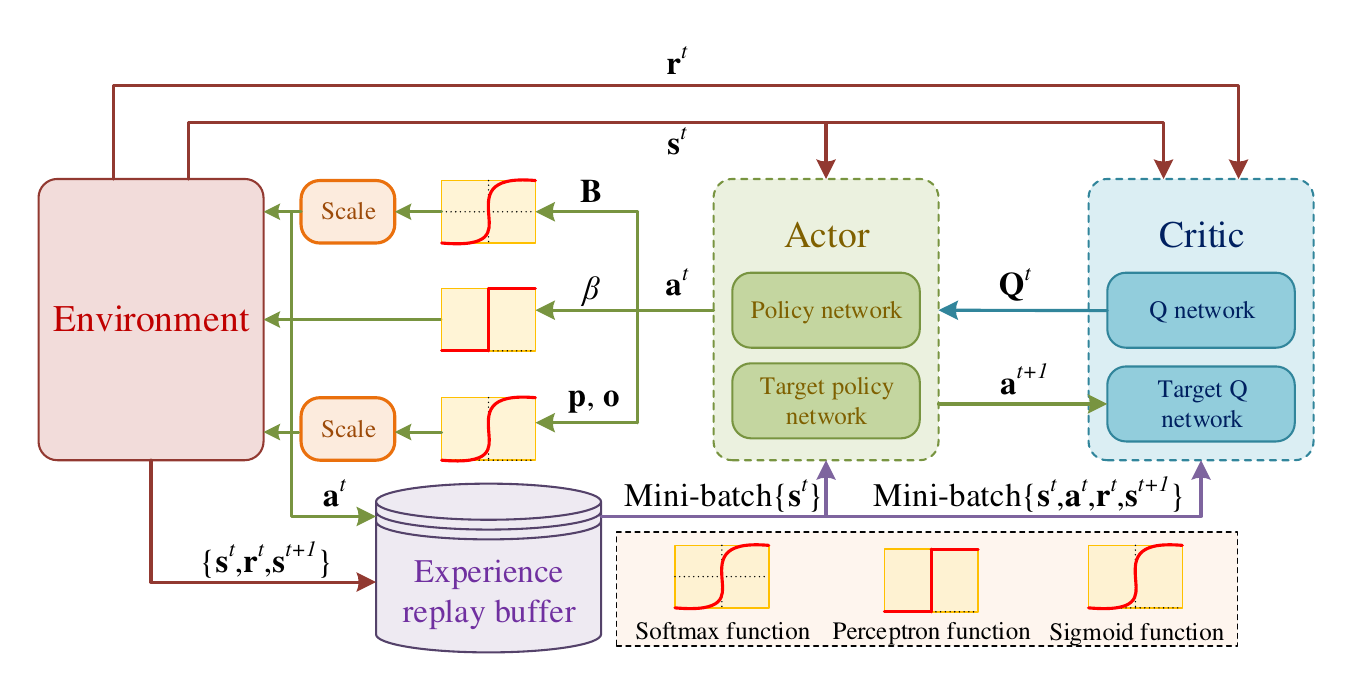}
\caption{\textcolor{duong}{The key features of DDPG-EI are the action settings: the actions belong to \emph{explicit constraints} from \textit{actor network} are scaled using \textit{Sigmoid}, \textit{Softmax} or \textit{ReLU} functions according to specific \emph{explicit constraints} for computational complexity reduction.}}
\label{fig:DDPG-EI}
\end{figure}
\subsection{Proposed Approach DDPG-EI}
\textcolor{duong}{From the proposed \emph{explicit constraints} and \emph{ambiguous constraints}, we redesign the DDPG algorithm into our new proposed DDPG-EI (see Fig.~\ref{fig:DDPG-EI}). Our main focus is to define action spaces to manage the data range and hence, satisfy the \emph{explicit constraints}. By doing so, we can reduce the optimization burden on the reward function to the \emph{ambiguous constraints}. The DDPG-EI can be demonstrated as follows:}
\subsubsection{State Space} 
Let $h^t_u$ denote the channel gain with the BS and the user $u$ at the time step $t$.  Then, the state space of the system, denoted by $S$, is defined as:
\begin{align}
    s_u^t = \{ h^t_1, \ldots h^t_U\}.
\end{align}
As a consequence, the state space $s_u$ can be defined as $s_u = \{s_u^1, \ldots, s_u^T \}$, where $T$ is the total number of time steps used for the RL training phase.
\subsubsection{Action Space}
\textcolor{duong}{Our objective is to design the action output with \emph{explicit constraints} as proposed in Section~\ref{sec:explicit-constraint}.} Given a certain state s, a control action is performed to determine $\mathbf{\beta} = \{ \beta_1, \beta_2, \ldots \beta_U\}$ for user selection, $\mathbf{P} = \{P_1, P_2, \ldots P_U\}$ for power control and $\mathbf{o} = \{ o_1, o_2, \ldots, o_U\}$ for compression ratios. \textcolor{duong}{Denote $\mathbf{A}$ as the action space of the system, $\mathbf{A}$ can be defined as: $\mathbf{A} = \{ \beta_1, P_1, o_1,\ldots \beta_U,P_U,o_U\}$.}
\subsubsection{Reward Function}
As mentioned in Section~\ref{sec:problem-formulation}, we minimize the total communication energy while maintaining the AI performance according to different tasks (i.e., AI model during the execution and training process). Due to the requirement of minimizing total energy over time, we use the total communication energy as the reward. Furthermore, we consider the data distortion constraints as a penalty in the reward function. Therefore, the reward can be defined as $\mathcal{R}=\{r^1,\ldots,r^T\}$, where the immediate reward $r^t$ is defined as:
\begin{align}
r^t= - \sum\nolimits_{u=1}^U E_{\textrm{sem}}^u + \lambda \mathcal{P}, 
\label{eq:temporal-reward}
\end{align}
where the subscript $\lambda$ is the coefficient that controls the goal-oriented regularization $\mathcal{P}$. Specifically, $\mathcal{P}$ is defined as the constraints on \emph{distortion resilience} according to different agent's settings. The core concept is that if a distortion resilience value surpasses a predetermined threshold, the AI model will focus on minimizing the deviation from that threshold. In detail, we define the distortion resilience penalty as:
\begin{align}
    \mathcal{P} = \max{\left\langle\left[\left(\eta^2 \frac{L}{2}-\eta\right)\left(L \sigma_{\mathrm{tot}}\right)^2-\varepsilon_\textrm{target}\right],0 \right\rangle},
\end{align}
and the penalty function for inference resilience task as
\begin{align}
    \mathcal{P} = \max{\left\langle\left[\frac{1}{\sqrt{2\pi}\sigma} p(\mathbf{t}\vert s_u) \exp{\frac{-W^2}{2\sigma^2_\textrm{tot}}}-\varepsilon_\textrm{target}\right],0 \right\rangle},
\end{align}
where the $\max(\cdot)$ function imitates the resilience property, which only captures the penalty only when the data distortion passes the upper threshold (i.e., where the AI task's performance is preserved). As it can easily be seen from the equation~\eqref{eq:temporal-reward}, we only consider the \emph{distortion resilience} as a penalty because other constraints \eqref{subeqn-opt-pro-training:compression-ratio}, \eqref{subeqn-opt-pro-training:max-bandwidth-constraint}, \eqref{subeqn-opt-pro-training:min-bandwidth-constraint}, \eqref{subeqn-opt-pro-training:min-power-constraint}, \eqref{subeqn-opt-pro-training:resource-allocation} can be satisfied by tuning the action output as mentioned in Section~\ref{sec:closed-form}. Given the immediate reward defined as in Equation~\eqref{eq:temporal-reward}, we have the accumulative long-term reward for the system, which can be expressed as follows: 
\begin{align}
    \mathcal{R}(\pi) = \sum^{I_\textrm{glob}}_{t} \gamma^{I_\textrm{glob}-t}\mathbf{r}^t (s^t,\pi(s^t)),
\label{eq:accumulative-reward}
\end{align}
where $\mathcal{R}(\pi)$ is the accumulative long-term reward of the agent under policy $\pi(\cdot)$ and $\mathbf{r}^t(s^t, \pi(s^t))$ is the instant reward at time step $t$. The notation $\gamma$ denotes the discount factor for the reward that reflects how much the reward depends on the past performance (i.e., when $\gamma$ is near $0$, the policy evaluation ignores the historical performance, and vice versa).

\subsection{Performance Analysis of DDPG-EI}
\textcolor{duong}{The reward in equation~\eqref{eq:temporal-reward} only considers the \emph{ambiguous constraints} while neglecting the \emph{explicit constraints}. Despite this simplification, the learning process of DDPG-EI can achieve better convergence compared to traditional DDPG. To support our theoretical analysis of the DDPG-EI approach, we have defined the boundaries of the action components within the reward section of the DDPG-EI observations as follows:
\begin{definition}\label{def:ECS_x}
    Given $P_{\mathrm{ECS},i}$ as the penalty for the action's values $x_i$ according to \say{explicit constraints} and come over the boundary:
    \begin{align}
    P_{\mathrm{ECS},i} &= \abs{\max\left\langle x_i - x_i^{\mathrm{max}}, 0 \right\rangle} \notag \\
                       &+ \abs{\max\left\langle x_i^{\mathrm{min}} -x_i, 0 \right\rangle} \geq 0,
    \end{align}
    where $i$ represents the index of different \say{explicit constraints}.
\label{def:ECS-P}
\end{definition}
From Definition~\ref{def:ECS-P}, we have the following lemma.
\begin{lemma}
    Given the EI design, the action value $x_i$ always satisfies $x_i^{\mathrm{min}} < x_i \leq x^{\mathrm{max}},~\forall i$, so that penalty on \say{explicit constraints} $P_{\mathrm{ECS-EI},i} = 0,~ \forall x$. 
\label{lemma:ECS-P}
\end{lemma}
\textit{Proof}. The proof is demonstrated in Appendix~\ref{appendix:ECS_x}.
\begin{theorem}
Any DRL approach with the integrated EI design (DRL-EI) consistently yields higher optimal rewards compared to DRL approach without EI design. In other words,
    \begin{align}
        \mathbf{r}^*_\mathrm{DRL-EI}(\mathbf{s}^t, \mathbf{a}^t) \geq \mathbf{r}^*_\mathrm{DRL}(\mathbf{s}^t, \mathbf{a}^t),~\forall \mathbf{s} \in \mathbf{S},~\forall \mathbf{a} \in \mathbf{A},
    \end{align}
where the $\mathbf{r}^*_\mathrm{DRL-EI}$ and $\mathbf{r}^*_\mathrm{DRL}$ represent the optimal reward of the DRL approaches with and without the integration of EI design, respectively.
\label{theorem:ECS_reward}
\end{theorem}
\textit{Proof}. The proof is demonstrated in Appendix~\ref{appendix:ECS_reward}. 
\\
Theorem~\ref{theorem:ECS_reward} demonstrates that the EI design enhances the learning performance of any DRL approach. Consequently, with the application of DDPG-EI, we can consistently attain superior performance compared to the conventional DDPG method.}

\section{Experimental Evaluations} \label{sec:experimental-results}
We assess DRGO performance in two different schemes: AI inference and AI training. Detailed settings for our experimental evaluations are provided in Appendix~\ref{sec:settings}. The official implementation is available on Github\footnote[1]{\url{https://github.com/Skyd-Semantic/DRGO-SemCom}} . 
\subsection{DRL Training}
\begin{figure}[!htb]
    \centering
    \includegraphics[width=\linewidth]{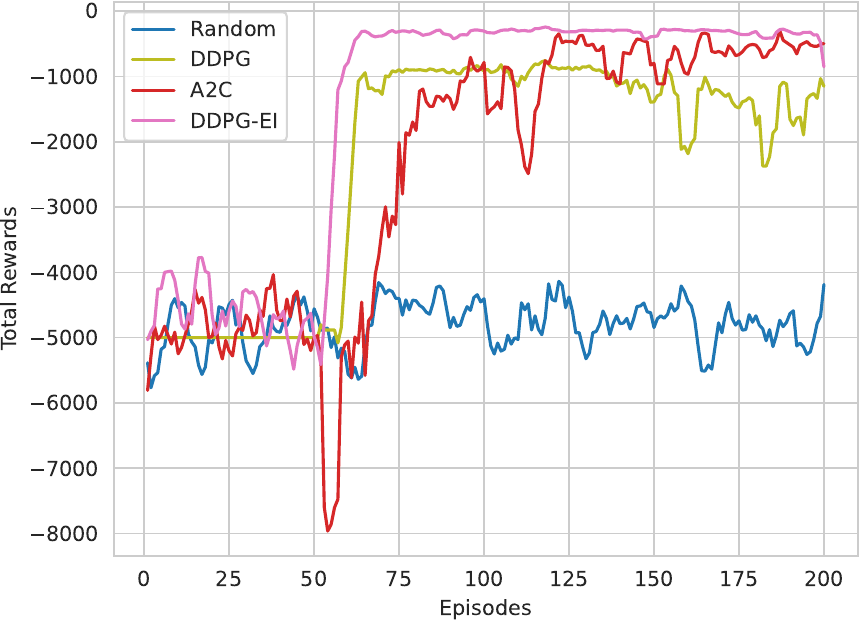}
    \caption{DRL performance on training optimization problem.}
    \label{fig:drl-result}
\end{figure}
We perform an experimental evaluation on the DDPG-EI algorithm and present the results in Figure~\ref{fig:drl-result}. To assess the algorithm's quality, we compare it against various baselines, including Random Action (where actions are selected randomly). Comparing with random action serves to determine if the model enhances system quality beyond the standard level achieved without optimization. Additionally, we compare the proposed solution against DDPG and Asynchronous Actor Critic (DDPG). It is evident that our method inherits the stability of the original DDPG approach and is less affected by noise compared to DDPG. Moreover, by employing fixed constraints at the action outputs, we reduce the unnecessary computational complexity of the optimization problem significantly. As a result, DDPG-EI demonstrates relatively rapid convergence compared to the other two algorithms and yields a much improved policy. Episodes from 0 to 50 involve random actions to populate the experience replay memory, which does not contribute to the training process , thereby avoiding computation costs during this phase. With fast convergence (within only 3 episodes * 200 timesteps per episode from the start of training), the system showcases the agent's ability to rapidly learn from the environment. Consequently, the system can adapt and perform swiftly when integrated into a network.

\subsection{AI Inference Phase}
\subsubsection{Distortion Resilience Boundary}
Two graphs in Figures~\ref{fig:inf_pts} and \ref{fig:inf_boundary} illustrate the influence of data distortion on the generalization gap, which is the difference in AI predicting accuracy between original data and the distorted data (accumulated from various forms of distortion during data transmission).
\begin{figure}[!htb]
	\centering
	\subfloat[\label{fig:inf_boundary}]{\includegraphics[width=0.485\linewidth]{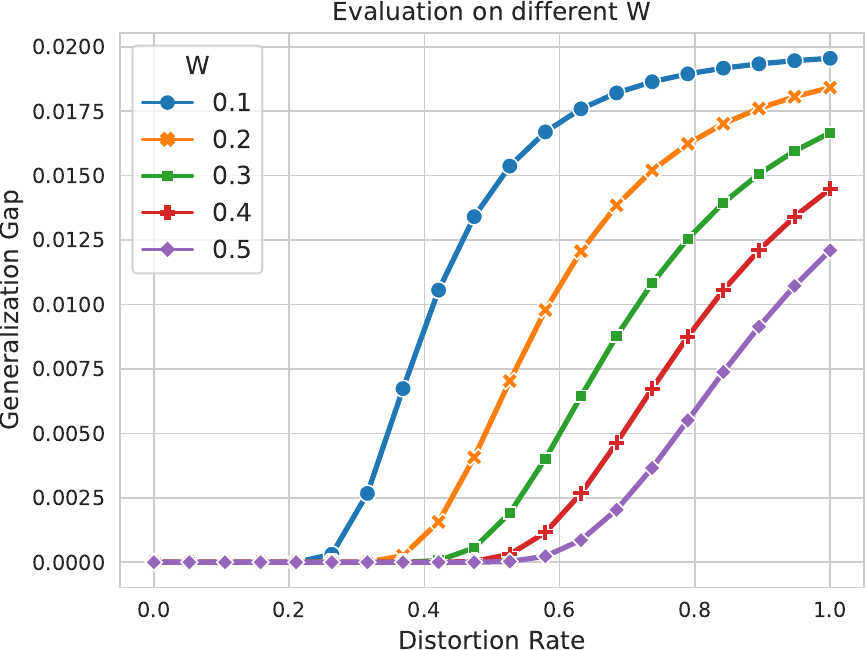}}
    \subfloat[\label{fig:inf_pts}]{\includegraphics[width=0.485\linewidth]{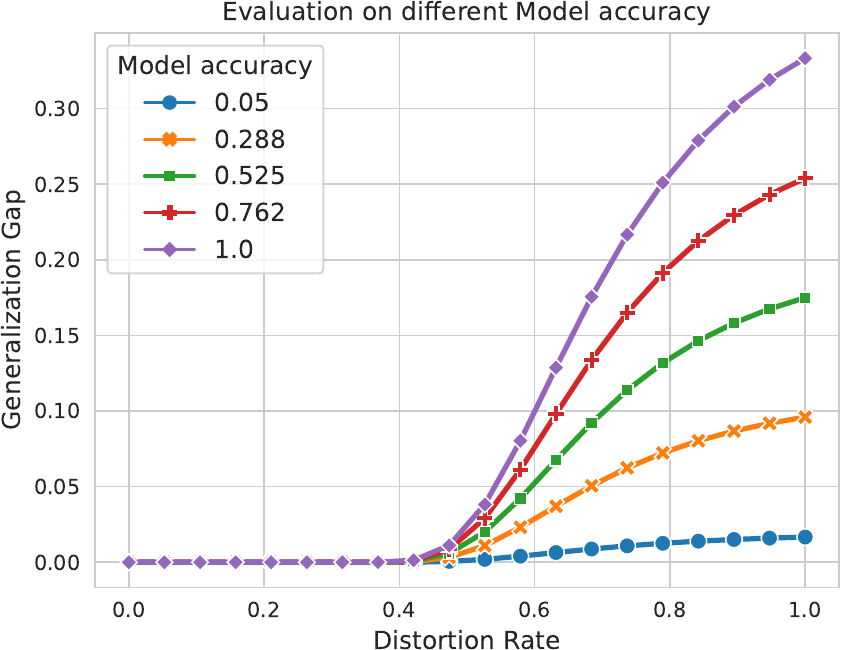}} 
    \caption{Illustration of the inference generalization gap (a) with different decision boundaries, and (b) with different model accuracy.}
\end{figure}
Figures~\ref{fig:inf_pts} and \ref{fig:inf_boundary} present a comparison of AI model performance across various model accuracies and decision boundaries, respectively, quantified by the posterior probability $p(\mathbf{t}\vert \mathbf{\hat{s}})$. As depicted in the graph, higher AI performance (indicated by greater values of $p(\mathbf{t}\vert \mathbf{\hat{s}})$) corresponds to a wider gap in generalization between predictions made using distorted data and clean data, respectively. Another influential factor on predictive performance is the decision boundary $W$. More specifically, as $W$ decreases in size, the generalization gap becomes more prominent. This is because a smaller decision boundary signifies more intricate data characteristics. For instance, referring to Figure~\ref{fig:garnacho}, when the decision boundary is elevated, alterations like changes in player's skin color can significantly impact the accuracy of player identity recognition.

Nevertheless, it is noticeable that the generalization gap experiences substantial impact as the data distortion surpasses a threshold of $0.2$. Consequently, it is evident that the optimal outcomes for our SemCom system are attainable when the distortion rate falls within the range of $0.2$ to $0.4$.

\subsubsection{Different levels of maximum power}
\begin{figure}[!htb]
	\centering
	\subfloat[\label{fig:t-infer-power}]{\includegraphics[width=0.48\linewidth]{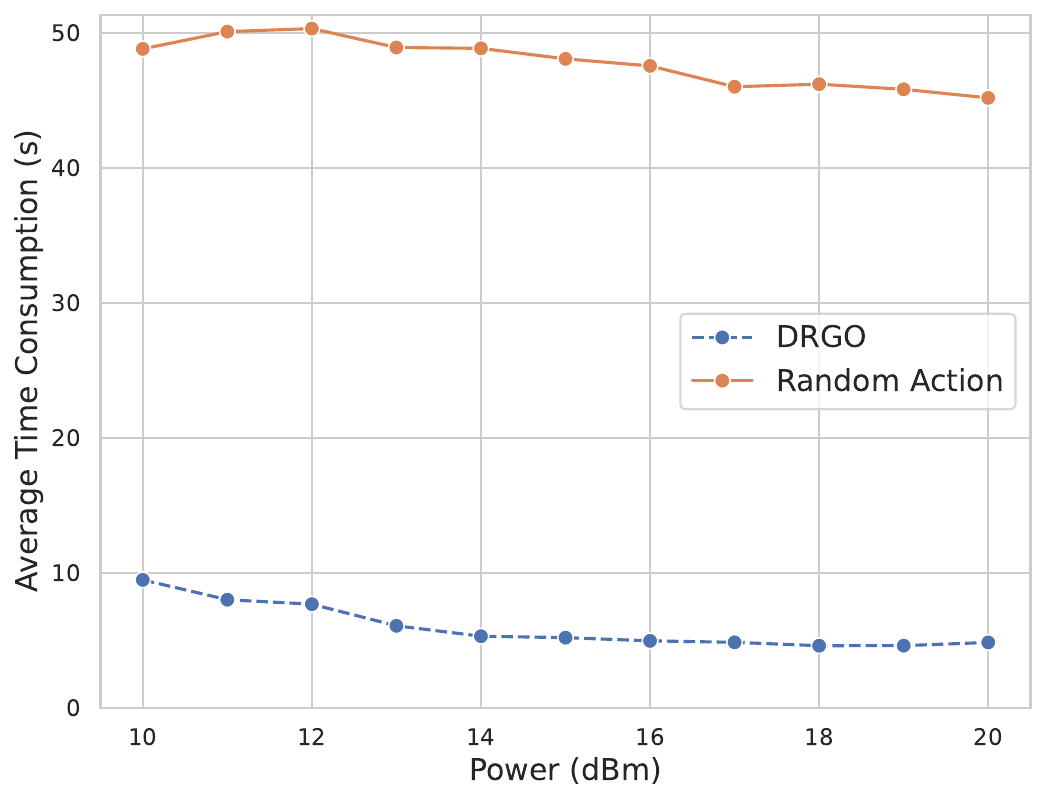}} 
	\subfloat[\label{fig:e-infer-power}]{\includegraphics[width=0.485\linewidth]{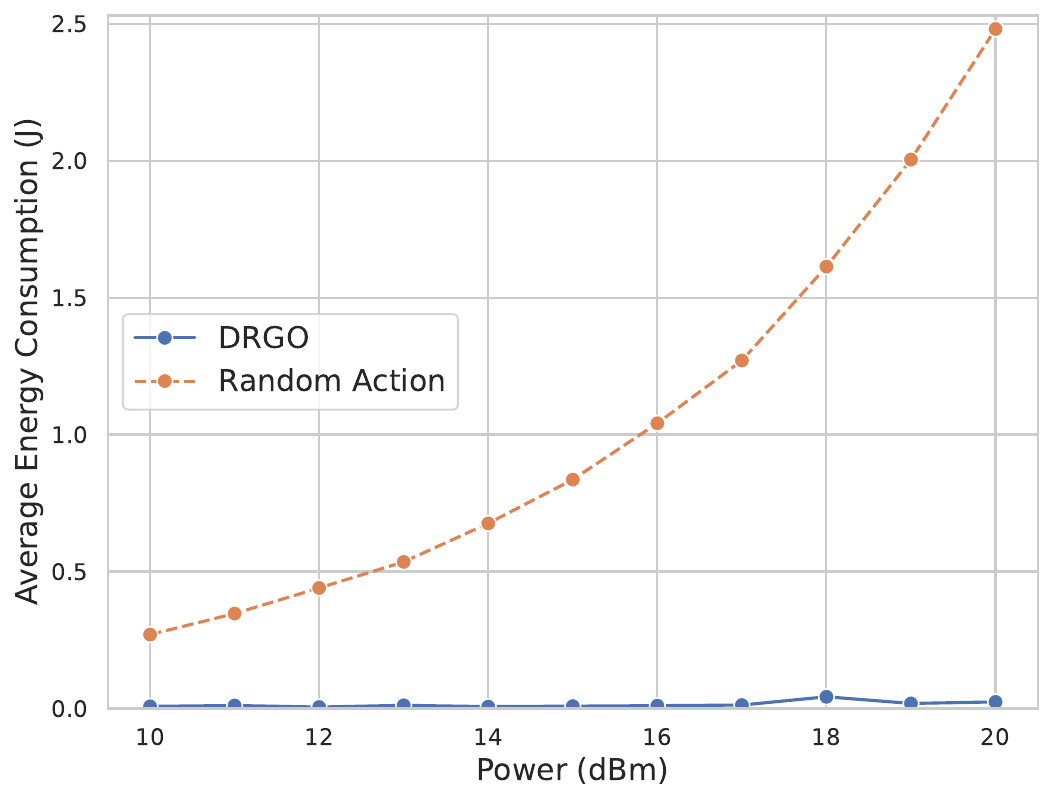}}
    \caption{The average transmission time and energy consumption versus the maximum transmit power of each physical device with $U$ = 10.}
\end{figure}
Figures~\ref{fig:t-infer-power} and \ref{fig:e-infer-power} depict the variations in total time and transmission energy consumption as the maximum transmit power of each user varies. 
Notably, our proposed DRGO system demonstrates a remarkable improvement, achieving a more than fivefold increase in both time delay and energy efficiency. This substantial enhancement can be attributed to the DRGO system's deliberate choice of a high compression ratio, which effectively optimizes resource allocation.

\subsubsection{Different numbers of users}
\begin{figure}[!htb]
	\centering
	\subfloat[\label{fig:t-infer-user}]{\includegraphics[width=0.477\linewidth]{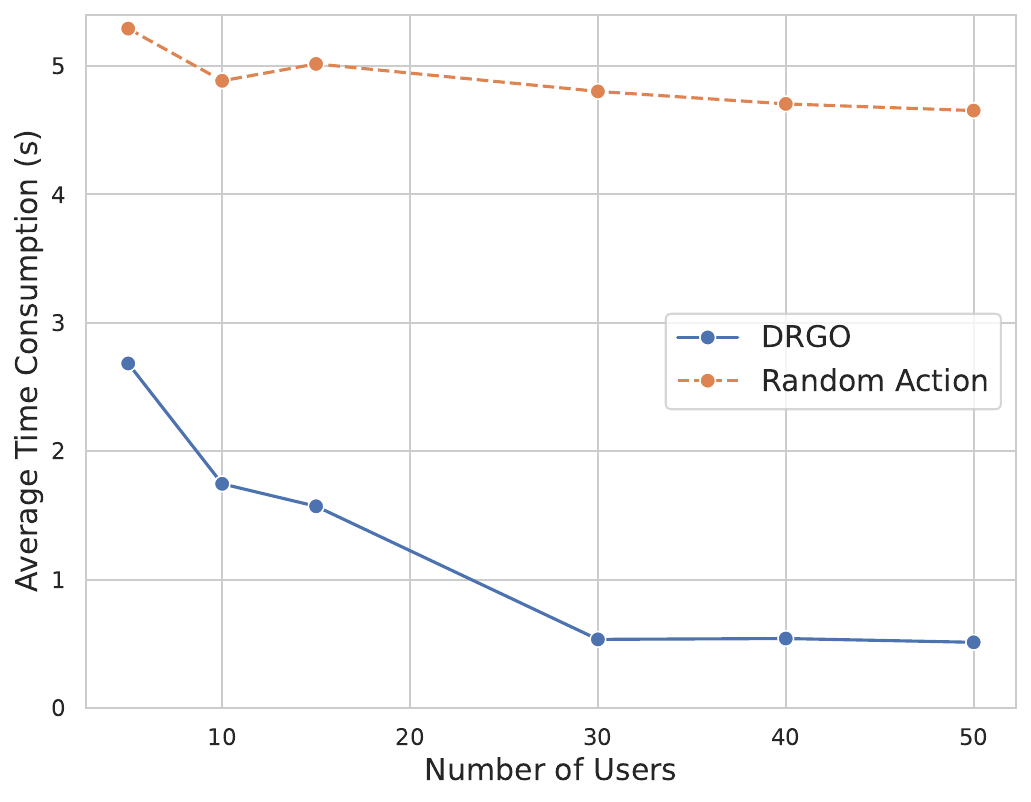}} 
	\subfloat[\label{fig:e-infer-user}]{\includegraphics[width=0.495\linewidth]{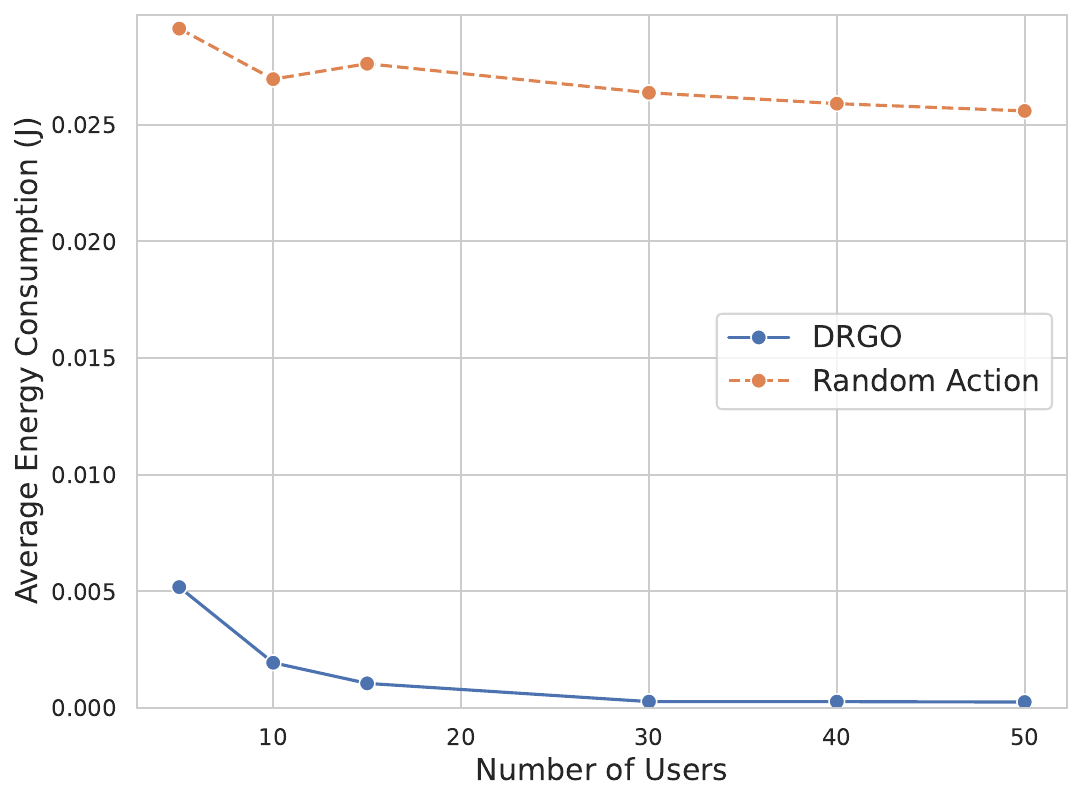}}
    \caption{The average time delay and transmission energy consumption versus the number of participating users with $P_\textrm{max}$ = 10 dBm.}
\end{figure}
Figures~\ref{fig:t-infer-user} and \ref{fig:e-infer-user} illustrate how the average energy and time delay vary with changes in the maximum transmit power of each user. We apply the experiments on different number of users (i.e., $U\in\{5,10,15,20,30,40,50\}$. According to the figures, our proposed DRGO system can achieve a significant improvement in energy efficiency, exceeding a factor of $10$, while maintaining significantly lower time delays compared to random actions (specifically, reducing the time delay from a range of $2$ to $5$ to a much lower level). This remarkable performance is primarily attributed to the utilization of compression techniques. By taking into account semantic distortion, DRGO can proactively assess the level of distortion resilience required. Consequently, the DRGO system can opt for a very high distortion ratio, enabling compression rates exceeding 60 times that of the original data. As a result, users only need to employ very low transmission power when sending data over the wireless channel, typically less than $-3.02$ dB ($0.0005$ W).

\subsection{AI Training Phase}
\subsubsection{Distortion Resilience}
Two graphs illustrate the influence of data distortion on the generalization gap, which is the difference in AI training performance between the original dataset and the dataset subjected to distortion-induced noise (accumulated from various forms of distortion during data transmission).
\begin{figure}[!htb]
	\centering
	\subfloat[\label{fig:train_lr}]{\includegraphics[width=0.485\linewidth]{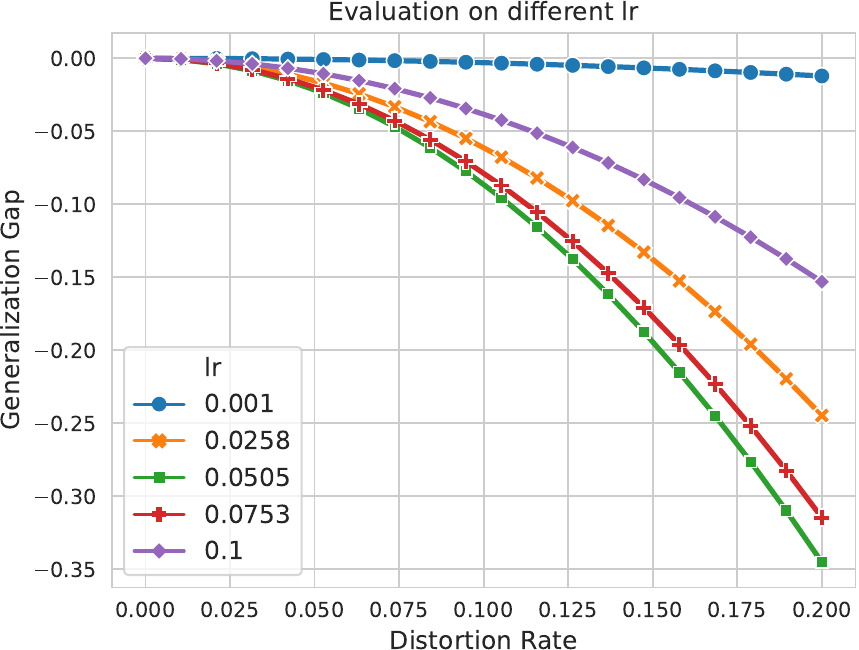}} 
	\subfloat[\label{fig:train-L}]{\includegraphics[width=0.485\linewidth]{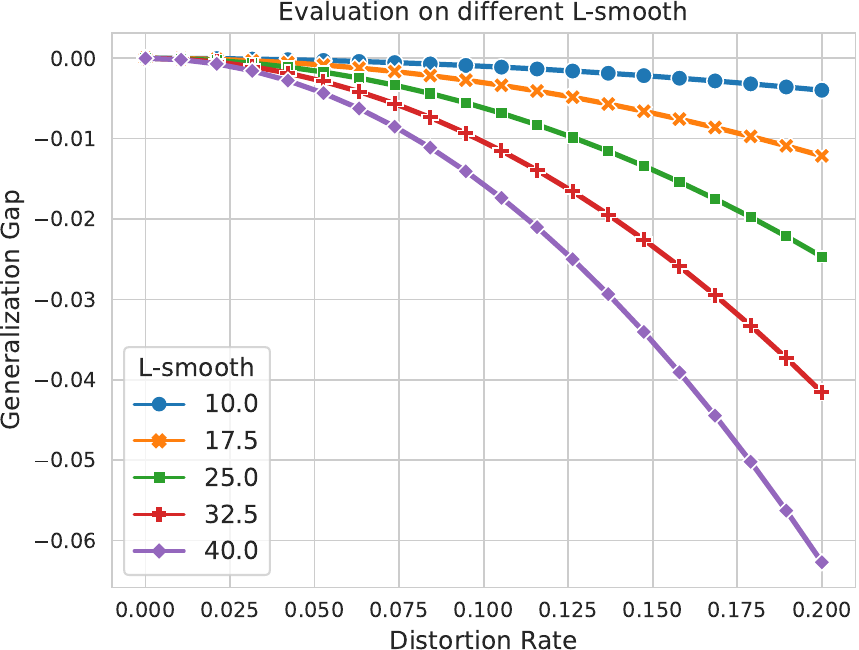}}
    \caption{Illustration of the training gradient inductive bound (a) with different $L$-smooth, and (b) with different learning rate $\eta$.}
\end{figure}
Figure~\ref{fig:train_lr} shows the generalization gap performance with different learning rates. As observed in the graph, when the learning rate is increased, the agent explores more aggressively, causing the gradient to oscillate around the minimizer instead of converging within it. This results in a significant disparity when using distorted data compared to clean data. Conversely, selecting a lower learning rate reduces the discrepancy in noise levels. However, this elongates the learning process and presents risks of gradient descent becoming trapped in sharp minimizers.

Figure~\ref{fig:train-L} evaluates the system performance with different levels of L-smoothness. L-smoothness corresponds to the dataset's complexity. Specifically, datasets with higher complexity result in more sharp minimizers. Therefore, the aforementioned datasets have larger L-smooth values. As seen, higher L-smooth values lead to greater discrepancies in error between the original and distorted data, especially with higher data distortions. This insight indicates that optimizing DRGO also hinges on recognizing variations in dataset complexity . For instance, with complex datasets like ImageNet, employing less distortion resilience is necessary, while simpler datasets call for more resilience. Moreover, one approach to enhancing distortion resilience is to partition the AI task into simpler subtasks and carry out independent multi-task learning. This helps reduce data complexity for each specific task. Several techniques can be applied in this context, such as Disentanglement Learning, Invariant Learning, Multi-task Learning, and Meta Learning.

\subsubsection{Different levels of maximum power}
\begin{figure}[!htb]
	\centering
	\subfloat[\label{fig:t-learn-power}]{\includegraphics[width=0.485\linewidth]{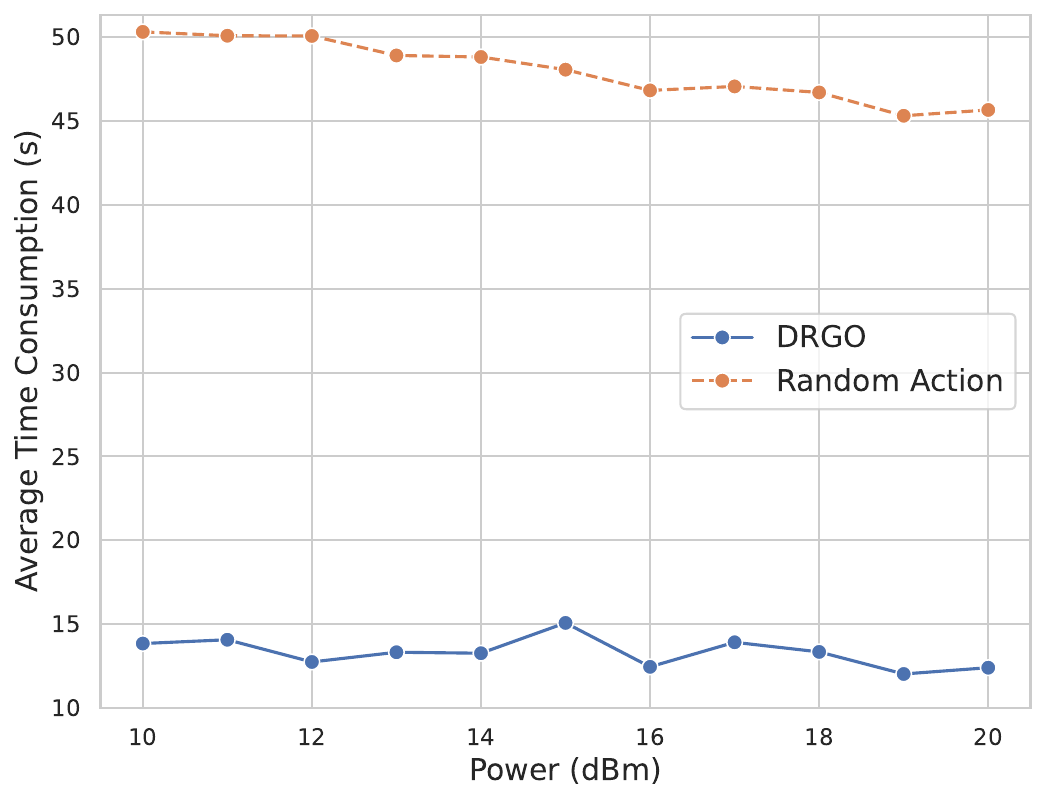}} 
	\subfloat[\label{fig:e-learn-power}]{\includegraphics[width=0.487\linewidth]{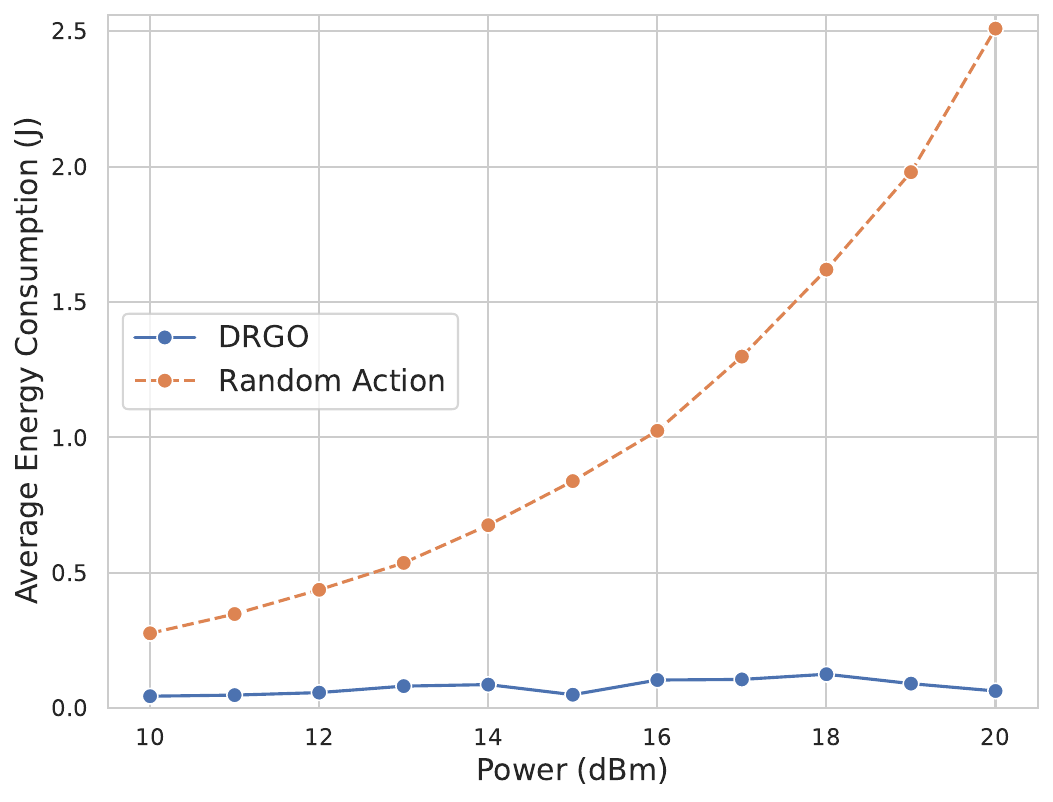}}
    \caption{The average time delay and transmission energy consumption versus the maximum transmit power of each physical device with $U$ = 10.}
\end{figure}
In Figures~\ref{fig:e-learn-power} and \ref{fig:t-learn-power}, we can observe the fluctuations in average energy and time delay as the maximum transmit power of each user undergoes changes. The figure provides a clear depiction of the relationship between increased maximum transmit power at the BS and decreased total transmission energy consumption. This phenomenon arises from the fact that higher transmit power leads to reduced transmission time, allowing more time for computational tasks and ultimately resulting in reduced total transmission energy consumption. Notably, our innovative DRGO system showcases a remarkable enhancement, achieving a more than fivefold improvement in both time delay and energy efficiency. This substantial advancement can be credited to the intentional selection of a high compression ratio by the DRGO system, effectively optimizing the allocation of resources.

\subsubsection{Different numbers of users}
\begin{figure}[!htb]
	\centering
	\subfloat[\label{fig:t-learn-user}]{\includegraphics[width=0.485\linewidth]{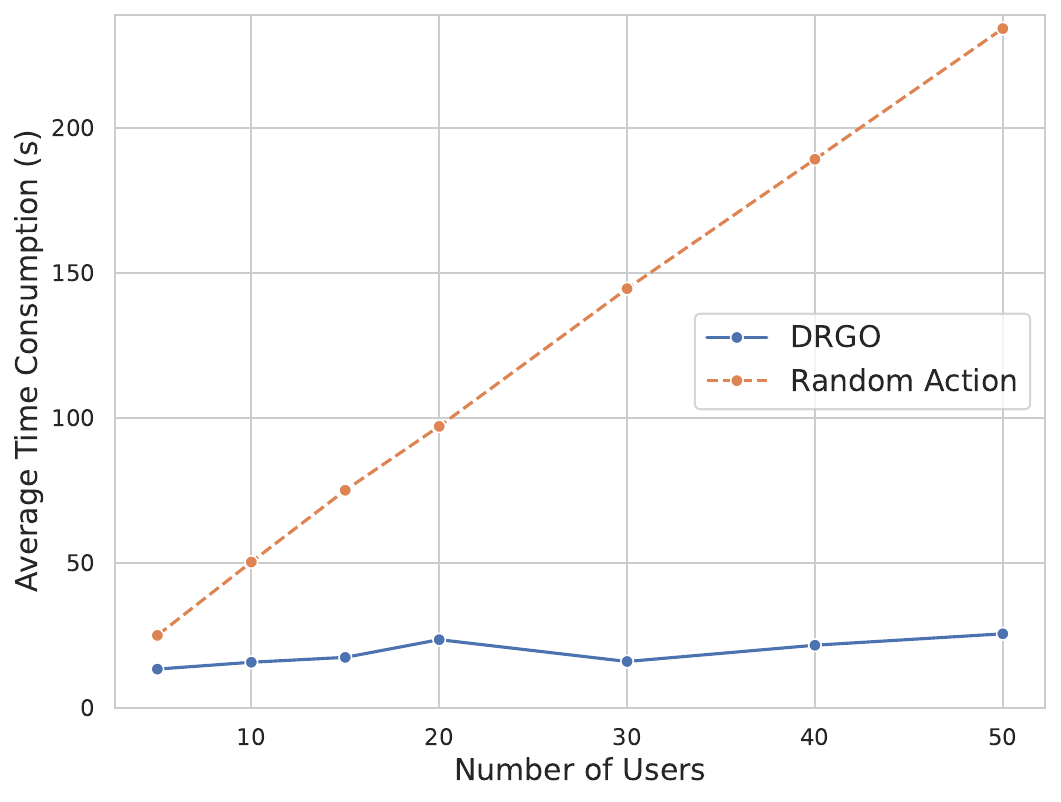}} 
	\subfloat[\label{fig:e-learn-user}]{\includegraphics[width=0.485\linewidth]{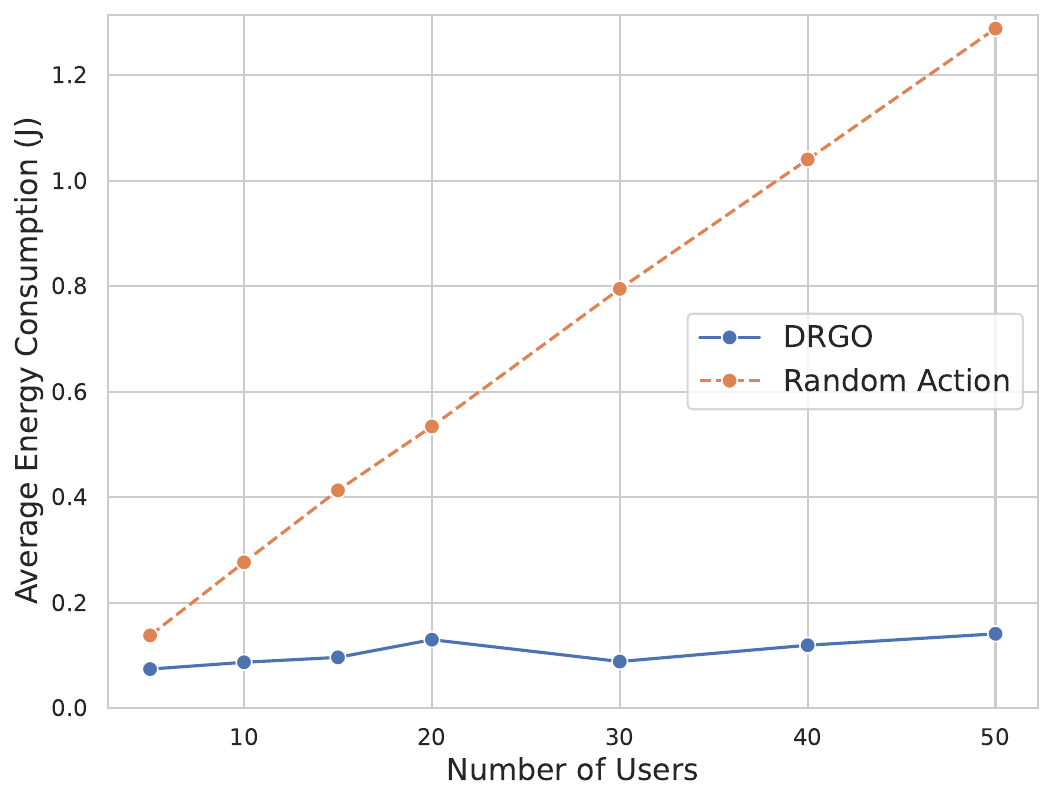}}
    \caption{The average time delay and transmission energy consumption versus the number of participating users with $P_\textrm{max}$ = 10 dBm.}
\end{figure}
Figures~\ref{fig:e-learn-user} and \ref{fig:t-learn-user} provide a visual representation of how average transmission energy consumption and time delay respond to variations in the maximum transmit power of each user. We apply the experiments on various numbers of users (i.e., $U\in\{5,10,15,20,30,40,50\}$. These figures underscore the impressive capabilities of our proposed DRGO system, which achieves a remarkable enhancement in energy efficiency, surpassing a tenfold improvement. Additionally, it maintains significantly reduced time delays compared to random actions, notably shrinking the time delay ranges from $2$ to $5$ to a substantially lower level. The extraordinary performance of our system is predominantly attributed to the strategic use of compression techniques. By factoring in semantic distortion considerations, DRGO can proactively gauge the required level of distortion resilience. Consequently, DRGO can select an exceptionally high distortion ratio, resulting in compression rates exceeding 60 times that of the original data. As a consequent, users only need to employ very low transmit power, typically falling below $-3.02$ dB, when transmitting data over the wireless channel.

\section{Conclusion}
This paper has explored efficient SemCom by addressing wireless resource allocation and semantic information extraction with data distortions. This is the first SemCom network optimization system that can maintain consistent semantic metrics and address the straggling issue of optimization of SemCom system. A novel approach has been introduced to enable users to transmit lossy data to a BS equipped with computing capabilities, which then performs AI operations based on the received data with a predefined distortion. This research has pioneered the integration of semantic distortion considerations into AI training, treating the learning process as an optimization problem to minimize the transmission energy consumption while accommodating generalization gaps. Moreover, this work has also presented an effective DRL solution, demonstrating the effectiveness of the proposed algorithm through numerical results.

\label{sec:conclusion}
\bibliographystyle{IEEEtran}
\bibliography{DRGO.bib}
\vspace{-0.5cm}
\begin{IEEEbiography}[{\includegraphics[width=1in,height=1.25in,clip,keepaspectratio]{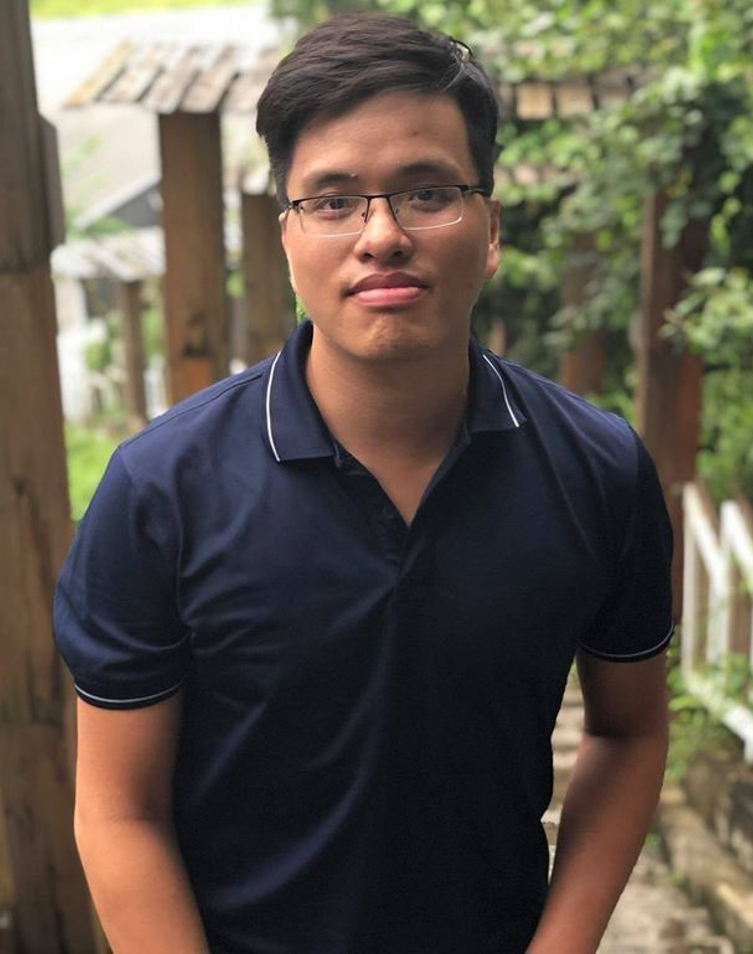}}]{Minh-Duong Nguyen} 
received a B.S. degree in electronics and telecommunications engineering from the Hanoi University of Science and Technology, Vietnam, in 2016. He was a DSP engineer in Viettel R\&D Center from 2017 to 2019. He was working in electronic warfare (i.e., electronic warfare systems and GPS/4G jamming systems).  
He was a senior embedded engineer in Vinsmart, Vingroup, from 2019 to 2020, developing physical layers for the 5G Base Station. He is currently pursuing his Ph.D.'s degree in the department of information convergence engineering at Pusan National University, South Korea. His research interests include reinforcement learning, federated learning, multi-task learning, meta-learning, domain generalization, data representation, and semantic communication. He was a recipient of the Top Reviewer Award from ICML in 2021, the IEEE ATC Best Paper Award in 2022. 
\end{IEEEbiography}
\vspace{-1cm}
\begin{IEEEbiography}[{\includegraphics[width=1in,height=1.25in,clip,keepaspectratio]{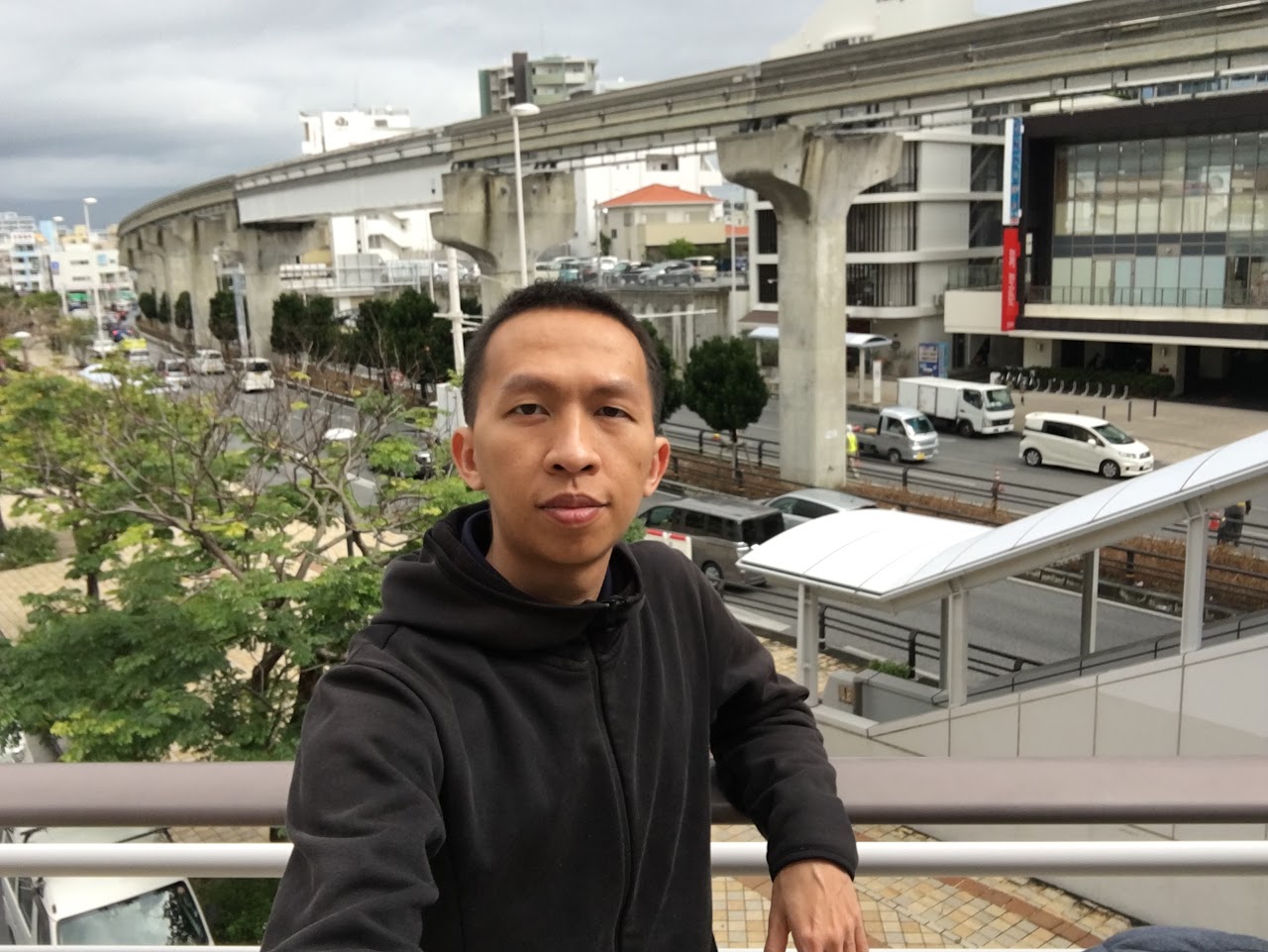}}]{Quang-Vinh Do}
Quang Vinh Do received his MSc degree in Electrical Engineering from Ho Chi Minh City University of Technology, Vietnam, in 2009, his Master’s degree in Electronic and Computer Engineering from RMIT University, Australia, in 2013, and his Ph.D. in Electrical Engineering from the University of Ulsan, South Korea, in 2020. He was a postdoctoral researcher at the University of Ulsan, South Korea, from Sept. 2020 to Feb. 2021. From Mar. 2021 to Aug. 2023, he was a postdoctoral research fellow at the Artificial Intelligence Research Center, Pusan National University, South Korea. He is currently a lecturer and researcher at Faculty of Electrical and Electronics Engineering, Ton Duc Thang University, Ho Chi Minh City, Vietnam. His research interests include developing and applying artificial intelligence techniques to wireless communication networks.
\end{IEEEbiography}  
\vspace{-1cm}
\begin{IEEEbiography}[{\includegraphics[width=1in,height=1.25in,clip,keepaspectratio]{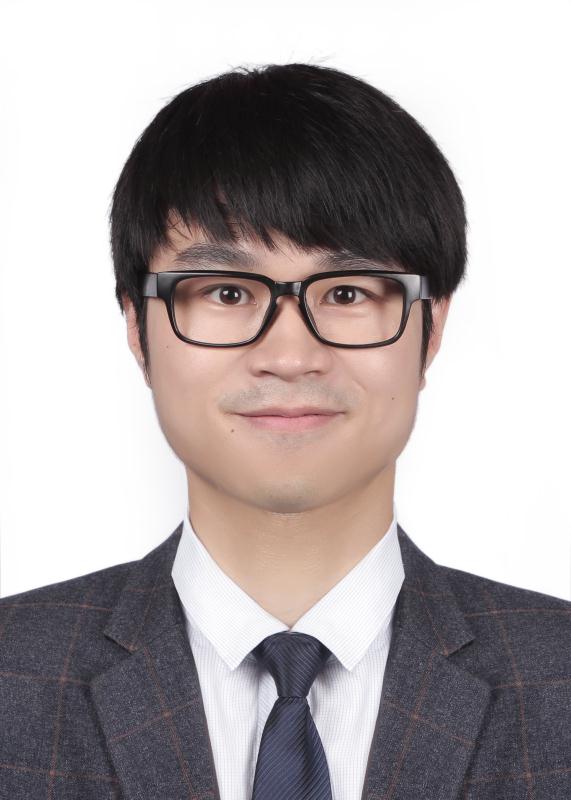}}]{Zhaohui Yang}
(Member, IEEE) received the Ph.D. degree from Southeast University, Nanjing, China, in 2018. He is currently a ZJU Young Professor with Zhejiang Key Laboratory of Information Processing Communication and Networking, College of Information Science and Electronic Engineering, Zhejiang University, Hangzhou, China. From 2018 to 2020, he was a Postdoctoral Research Associate with the Center for Telecommunications Research, Department of Informatics, King’s College London, London, U.K. From 2020 to 2022, he was a Research Fellow with the Department of Electronic and Electrical Engineering, University College London, London. His research interests include joint communication, sensing, and computation, federated learning, and semantic communication. Dr. Yang received the IEEE Communications Society Leonard G. Abraham Prize Award in 2024, the IEEE Marconi Prize Paper Award in 2023, and the IEEE Katherine Johnson Young Author Paper Award in 2023. He currently serves as an Associate Editor for IEEE Transactions on Green Communications and Networking, IEEE Communications Letters, and IEEE Transactions on Machine Learning in Communications and Networking. He has served as a Guest Editor for several journals, including IEEE Journal on Selected Areas in Communications.
\end{IEEEbiography}  
\vspace{-1cm}
\begin{IEEEbiography}[{\includegraphics[width=1in,height=1.25in,clip,keepaspectratio]{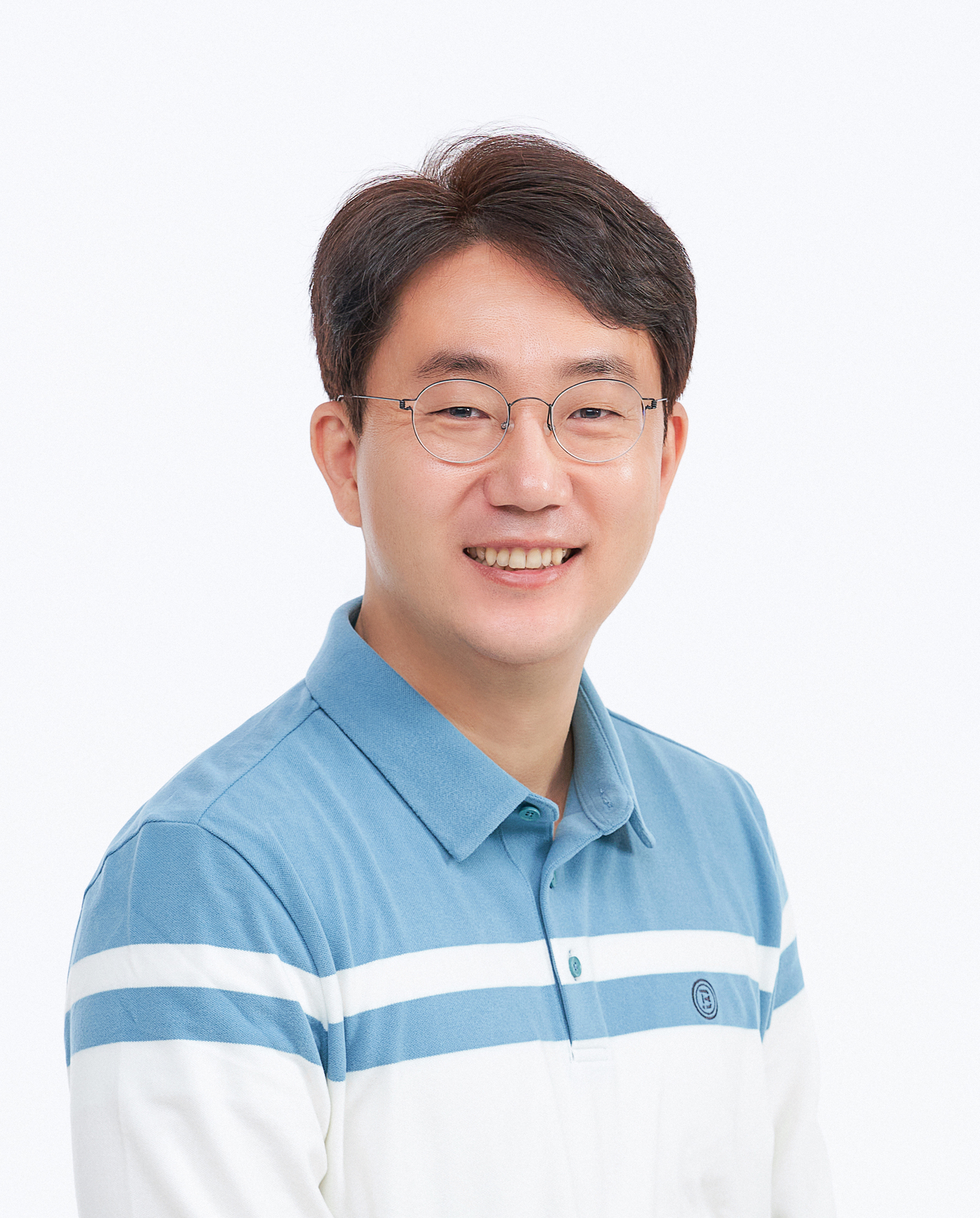}}]{Won-Joo Hwang} 
(S’01-M’03-SM’17) received the B.S. and M.S. degrees in computer engineering from Pusan National University, Busan, South Korea, in 1998 and 2000, respectively. He received the Ph.D. degree in information systems engineering from Osaka University, Osaka, Japan, in 2002. From 2002 to 2019, he was a Full Professor at the Inje University, Gimhae, South Korea. Currently, he is a Full Professor in the Biomedical Convergence Engineering Department at the Pusan National University. His research interests include optimization theory, game theory, machine learning and data science for wireless communications and networking. 
\end{IEEEbiography} 
\vspace{-1cm}
\begin{IEEEbiography}[{\includegraphics[width=1in,height=1.25in,clip,keepaspectratio]{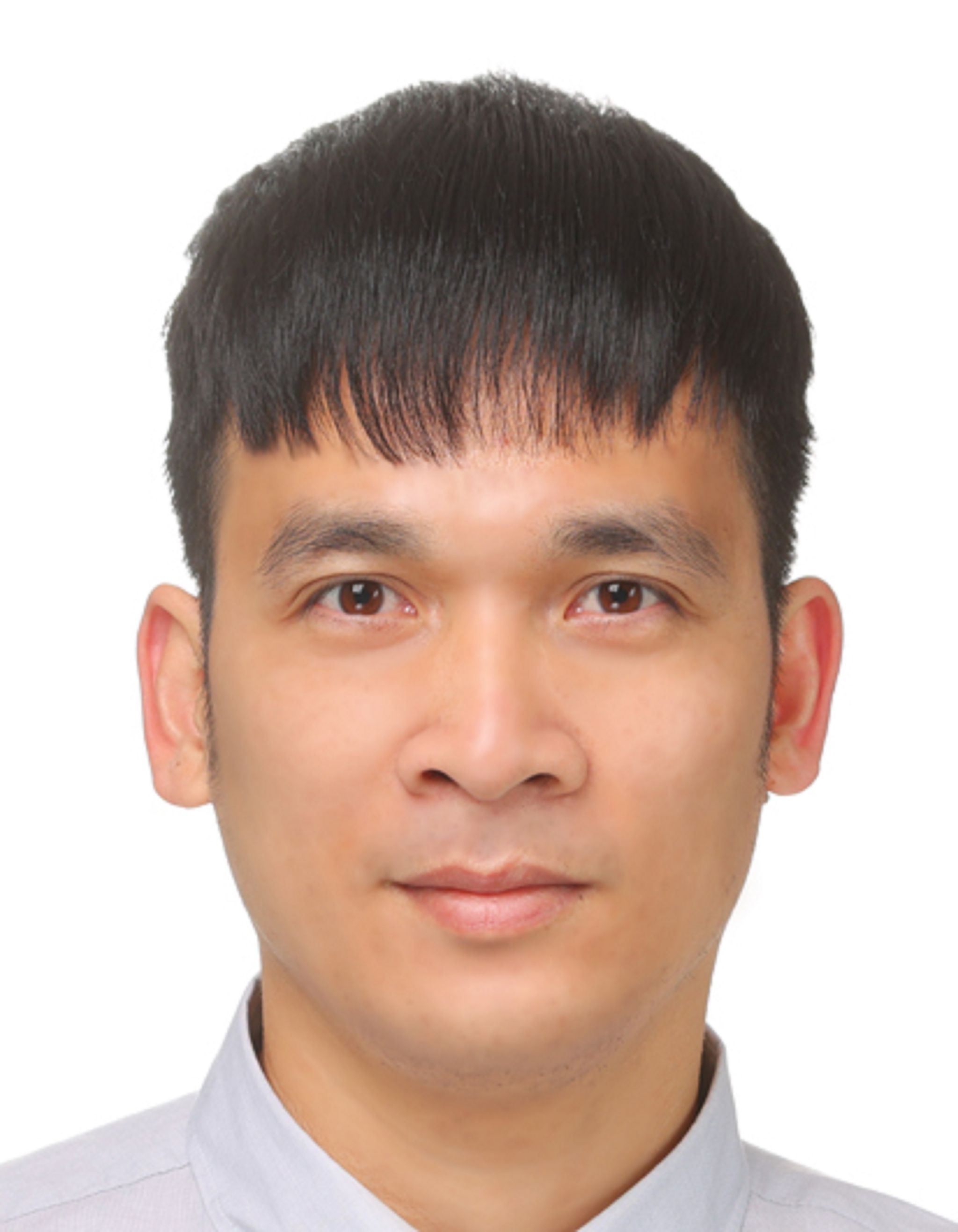}}]{Quoc-Viet Pham} (M’18, SM’23) is currently an Assistant Professor in Networks and Distributed Systems at the School of Computer Science and Statistics, Trinity College Dublin, Ireland. He is also an Associate Investigator of the SFI CONNECT centre and a Supervisor of the SFI ADVANCE centre. He earned his Ph.D. degree in Telecommunications Engineering from Inje University, Korea, in 2017.
He specialises in applying convex optimisation, game theory, and machine learning to analyse and optimise cloud edge computing, wireless networks, and IoT systems. He was awarded the prestigious Korea NRF funding for outstanding young researchers for the term 2019-2024. He was a recipient of the Best Ph.D. Dissertation Award in 2017, the Top Reviewer Award from IEEE Transactions on Vehicular Technology in 2020, the Golden Globe Award in Science and Technology for Younger Researchers in Vietnam in 2021, the IEEE ATC Best Paper Award in 2022, and the IEEE MCE Best Paper Award in 2023. He was honoured with the IEEE ComSoc Best Young Researcher Award for EMEA 2023 in recognition of his research activities for the benefit of the Society. He is serving as an Editor for IEEE Communications Letters, IEEE Communications Standards Magazine, IEEE Communications Surveys \& Tutorials, Journal of Network and Computer Applications, and REV Journal on Electronics and Communications, and has served as an Editor or Guest Editor for IEEE Internet of Things Journal, IEEE Internet of Things Magazine, IEEE Transactions on Consumer Electronics, Computer Communications, and Scientific Reports.
\end{IEEEbiography}   
\vfill

\clearpage
\appendices 
\section{Experimental Settings}
\label{sec:settings}
In this section, we study the performance of the proposed DRGO assisted goal-oriented SemCom by using computer simulation results. All statistical results are averaged over $10$ independent runs.
\subsection{DRL settings}
In our simulations, the learning rates for actor and critic networks are $3\times 10^{-4}$ and $1\times 10^{-3}$, respectively. The reward discount parameter is $\gamma=0.99$, the network soft updating parameter is $\tau = 5\times 10^{-3}$, and the batch size for the experience replay memory is $128$. 

\subsection{System settings}
For our simulations, we deploy $U=50$ users uniformly in a square area of $1\textrm{km}\times 1\textrm{km}$ with the BS located at its center. The path loss model is $128.1 + 37.6 \log_{10} d$ ($d$ is in km) and the standard deviation of shadow fading is $8$ dB. The system bandwidth is configured at $100$ MHz. In addition, the noise power spectral density is $N_0 = -174$ dBm/Hz.

\textcolor{duong}{To simulate the physical layer, we follow \cite{2019-Sem-DJSCCN} and use DJSCC-N (reproducible code is available at Github\footnote[1]{\url{https://github.com/skydvn/SemCom-Pytorch?fbclid=IwZXh0bgNhZW0CMTAAAR0hrOexV_QJqr3Nnz404nX2WEuPpSJx6NAw_7BEU0mMiE_yoRyuKpIsSnQ_aem_BUx-HFJzH8IEUVFWF7wvMg}}) for the channel coding and 16-QAM for the modulation. }

\textcolor{duong}{We use the CIFAR-10 dataset \cite{2010-DL-CIFAR} to evaluate the AI process. The prediction task associated with the data is the classification problem, which consists of $10$ possible labels that needed to be predicted. The images are assumed to be sampled by distributed devices, and transferred via communication link to the BS. We choose an approximate data size of $24.528$ Kb which is related to the averaged size of CIFAR-10 images.}

\subsection{Goal-oriented settings}
To properly simulate the goal-oriented SemCom, we consider the AI training task. In goal-oriented AI task, the key features that capture all characteristics of training data and training AI model are $L$-smooth and $\mu$-strongly convex. To this end, we deploy a classification task on DNN on the CIFAR-10 dataset \cite{2010-DL-CIFAR}. We sample data and feed through the AI model to consider the Hessian of the loss function. We follow the sharp minimizer theory \cite{2017-SharpMinima-Generalize, 2018-DL-LinearScaling, 2020-DL-LAMB} that at the initial phase of the AI training, the minimizer tends to be the sharpest. Therefore, considering the Hessian of the loss function of the untrained AI model with specific dataset, we can have the minimizer with highest second-order derivative value, which is approximately close to the $L$-smooth and $\mu$ strongly convex value. To find the $L$ and $\mu$ values, we implement an experimental evaluation on dataset, and our implementation code is accessible at Github\footnote[2]{\url{https://github.com/Skyd-Semantic/DRGO-SemCom/blob/main/theory_eval/Lsmooth_Estimation.ipynb}.}. It is found that the $L$ and $\mu$ value on MNIST and CIFAR-10 dataset is higher than $10$ and $30$, respectively. Because of the assymetric characteristic of $L$-smoothness and $\mu$ strong convexity, we can choose the $L$ and $\mu$ for our experimental evaluation with the set of values $\{10, 17, 25, 32.5, 40\}$.

\subsection{Semantic Compression settings}
\label{sec:semantic-setting}
To make the appropriate estimation for compression - \say{ratio to data distortion}, we use a convolutional autoencoder on the CIFAR-10 dataset \cite{2010-DL-CIFAR} to simulate the performance of SemCom. To be more specific, we train the autoencoder to reconstruct the original data via the MSE function. The embedding vectors with smallest dimensionality represent the compressed representation and the compression ratio is considered via the equation~\eqref{eq:compression-ratio}. Our empirical implementation for compression - \say{ratio to data distortion} can be found via experimental evaluation code\footnote[2]{\url{https://github.com/Skyd-Semantic/DRGO-SemCom/blob/main/theory_eval/CompressionEvaluation.ipynb}.}. The mapping of \say{ratio to data distortion} is demonstrated as in Table~\ref{tab:semantic-compression}.
\begin{table}[]
\caption{Experimental Results of compressors with different compression ratio.}
\label{tab:semantic-compression}
\begin{tabular}{|c|c|c|c|c|c|}
\hline
Comp.       & Data        & Loss       & Comp.       & Data        & Loss       \\
Ratio       & Dimension   &            & Ratio       & Dimension   &            \\ \hline
192         & 16          & 0.799      & 24          & 128         & 0.193      \\ \hline
96          & 32          & 0.539      & 16          & 192         & 0.131      \\ \hline
48          & 64          & 0.337      & 12          & 256         & 0.098      \\ \hline
32          & 96          & 0.249      & 10          & 312         & 0.079      \\ \hline
9           & 341         & 0.069      & 8           & 384         & 0.060      \\ \hline
6           & 576         & 0.034      & 4           & 768         & 0.020      \\ \hline
3           & 1152        & 0.017      & 2           & 1536        & 0.015      \\ \hline
\end{tabular}
\end{table}

\section{Proof on Lemma~\ref{lemma:sequential-model-distortion}}
\label{appendix:sequential-model-distortion}
Before proving Lemma~\ref{lemma:sequential-model-distortion}, we first adopt the following lemma: 
\begin{lemma}
    Consider the convolution of two consecutive systems which follow the Gaussian process (i.e., $f_1(x) = \mathcal{N}(0, \sigma^2_1)$ and $f_2(x) = \mathcal{N}(0, \sigma^2_2)$). The resulting function follows the Gaussian process and has the resulting variance as $\sigma^2_\textrm{tot} = \sigma^2_1 + \sigma^2_2$. For instance: 
    \begin{align}
        f_\textrm{tot}(x)= f_1(z) \otimes f_2(z) = \frac{1}{\sqrt{2 \pi} \sigma_\textrm{tot}} \exp \left[-\frac{z^2}{2 \sigma_\textrm{tot}^2}\right]
    \end{align}
\label{lemma:conv-gaussian}
\end{lemma}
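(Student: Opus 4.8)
The plan is to prove the claim by direct evaluation of the convolution integral, and then to record the probabilistic interpretation that makes the additivity of variances transparent. Writing the convolution of the two densities explicitly,
\begin{align}
f_\textrm{tot}(z) = \int_{-\infty}^{\infty} f_1(\tau)\, f_2(z-\tau)\, d\tau = \frac{1}{2\pi\sigma_1\sigma_2}\int_{-\infty}^{\infty} \exp\!\left[-\frac{\tau^2}{2\sigma_1^2} - \frac{(z-\tau)^2}{2\sigma_2^2}\right] d\tau,
\end{align}
the whole task reduces to evaluating a single Gaussian integral in the dummy variable $\tau$.

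First I would collect the exponent as a quadratic in $\tau$ and complete the square, separating a $\tau$-dependent Gaussian factor from a residual term that depends only on $z$. The coefficient of $\tau^2$ is $\tfrac{1}{2}\!\left(\sigma_1^{-2}+\sigma_2^{-2}\right) = \tfrac{\sigma_1^2+\sigma_2^2}{2\sigma_1^2\sigma_2^2}$, and after completion the residual collapses to precisely $-z^2/[2(\sigma_1^2+\sigma_2^2)]$. Next I would perform the standard Gaussian integral over $\tau$, whose value is $\sqrt{2\pi}\,\sigma_1\sigma_2/\sqrt{\sigma_1^2+\sigma_2^2}$, and verify that the leading prefactor $1/(2\pi\sigma_1\sigma_2)$ cancels against it to leave exactly $1/(\sqrt{2\pi}\,\sigma_\textrm{tot})$ with $\sigma_\textrm{tot}^2 = \sigma_1^2+\sigma_2^2$. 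This simultaneously confirms the Gaussian form of $f_\textrm{tot}$ and the additive-variance formula.

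As an alternative route and as a sanity check, I would invoke the characteristic-function (Fourier) argument: convolution in the signal domain corresponds to multiplication in the Fourier domain, and the transform of $\mathcal{N}(0,\sigma^2)$ is $\exp(-\sigma^2 t^2/2)$, so the product of the two transforms is $\exp[-(\sigma_1^2+\sigma_2^2)t^2/2]$, which inverts uniquely to $\mathcal{N}(0,\sigma_1^2+\sigma_2^2)$. Equivalently, modelling the two cascaded stages as adding independent zero-mean Gaussian noises $X_1, X_2$, the sum $X_1+X_2$ is Gaussian with variance $\sigma_1^2+\sigma_2^2$, since variances of independent random variables add.

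The main obstacle here is not the algebra but the modelling justification: I must argue that cascading two stochastic stages composes their noise densities by \emph{convolution} rather than, say, by a product. This follows once each stage is treated as an additive-noise channel whose output density is the input density perturbed by an independent noise term, so that the end-to-end density is the convolution of the two stage densities; granting this, the computation above closes the proof. Finally, I would note that the general $L$-stage statement of Lemma~\ref{lemma:sequential-model-distortion} then follows by a straightforward induction, applying this two-stage result repeatedly so that the total variance accumulates as $\sigma_\textrm{tot}^2 = \sum_{l=1}^{L}\sigma_l^2$.
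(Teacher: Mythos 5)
Your primary argument---direct evaluation of the convolution integral by completing the square in the dummy variable and recognizing the leftover $\tau$-integral as a normalized Gaussian---is exactly the route the paper's proof in Appendix~\ref{appendix:conv-gaussian} takes, and your algebraic outline is correct. The characteristic-function sanity check and the remark on the $L$-stage induction are sound additions but not needed beyond what the paper already does.
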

\textit{Proof:} The proof is demonstrate in Appendix~\ref{appendix:conv-gaussian}.

We consider a sequential process, consisting of $L$ distinct Gaussian process $\mathbb{P}(x^L\vert x^1)$, which can also be considered as a Markov Process. Therefore, the sequential process can be represented as follows: 
\begin{align}
    \mathbb{P}(x^L \vert x^0) = \prod^{L}_{l = 1} \mathbb{P}(x^{l} \vert x^{l-1}).
\end{align}
If we consider each transition probability $\mathbb{P}(x^{l+1} \vert x^l)$ as a function of a transformation from $x^l$ to $x^{l+1}$, we can have the total Markov Process $\mathbb{P}(x^L \vert x^0)$ as a serial convolution function of all component transformation functions. For instance: 
\begin{align}
    \mathbb{P}(x^L \vert x^0) = P(L\Vert L-1)\otimes(\ldots\otimes(P(2\vert 1)\otimes P(1\vert 0))).
\label{eq:serie-conv}
\end{align}
With 2 elements (corresponding to $L=2$), we have:
\begin{align}
    \mathbb{P}(x^2 \vert x^0) = P(x^2 \vert x^1) \otimes P(x^2 \vert x^1).
\end{align}
By applying Lemma~\ref{lemma:conv-gaussian}, the data distortion of the system with two consecutive processes is equal to $\sigma^2_\textrm{tot} = \sigma^2_1 + \sigma^2_2$. By performing the induction operation on \eqref{eq:serie-conv}, we have the total data distortion can be calculated as follows: 
\begin{align}
    \sigma^2_\textrm{tot} = \sum^L_{l=0} \sigma^2_l.
\end{align}

\section{Proof on Lemma~\ref{lemma:conv-gaussian}} \label{appendix:conv-gaussian}
Consider the two convolution of $f_1$ and $f_2$, the resulting distribution $f_\textrm{tot}$ is achieved through the convolution of $f_1$ and $f_2$, expressed as follows:
\begin{align}
    f_\textrm{tot} = f_1(z) \otimes f_2(z)=\int_{-\infty}^{\infty} f_2(z-x) f_1(x) dx,
\end{align}
where $\otimes$ denotes the convolution operation. Given that $f_1$ and $f_2$ are normal densities with two distortion $\sigma_1^2$ and $\sigma_2^2$, respectively. We have the following: 
\begin{align}
f_1(z)=\mathcal{N}\left(0, \sigma_1^2\right)=\frac{1}{\sqrt{2 \pi} \sigma_1} e^{-z^2 /\left(2 \sigma_1^2\right)} \\
f_2(z)=\mathcal{N}\left(0, \sigma_2^2\right)=\frac{1}{\sqrt{2 \pi} \sigma_2} e^{-z^2 /\left(2 \sigma_2^2\right)}.
\end{align}
Substituting these expressions into the convolution equation:
\begin{align}
f_\textrm{tot}(z) 
&=\int_{-\infty}^{\infty} \frac{1}{\sqrt{2 \pi} \sigma_2} \exp \left[-\frac{\left(z-x\right)^2}{2 \sigma_2^2}\right] \times\notag \\
&~~~~~~~~~~\frac{1}{\sqrt{2 \pi} \sigma_1} \exp \left[-\frac{x^2}{2 \sigma_1^2}\right] d x \\
& =\int_{-\infty}^{\infty} \frac{1}{\sqrt{2 \pi} \sqrt{2 \pi} \sigma_1 \sigma_2} \times\notag \\
&~~~~~~~~~~\exp \left[-\frac{\sigma_1^2\left(z-x\right)^2+\sigma_2^2 x^2}{2 \sigma_1^2 \sigma_2^2}\right] d x \\
& =\int_{-\infty}^{\infty} \frac{1}{\sqrt{2 \pi} \sqrt{2 \pi} \sigma_1 \sigma_2} \times  \\
&~~~~~~~~~~\exp \left[-\frac{\sigma_1^2\left(z^2+x^2-2xz\right)+\sigma_2^2 x^2}{2 \sigma_2^2 \sigma_1^2}\right] dx \notag \\
& =\int_{-\infty}^{\infty} \frac{1}{\sqrt{2 \pi} \sqrt{2 \pi} \sigma_1 \sigma_2} \times \\
&~~~~~~~~~~\exp \left[-\frac{x^2\left(\sigma_1^2+\sigma_2^2\right)-2 x \sigma_1^2 z +\sigma_1^2 z^2}{2 \sigma_2^2 \sigma_1^2}\right] d x \notag .
\end{align}
We denote $\sigma_\textrm{tot}=\sqrt{\sigma_1^2+\sigma_2^2}$. Thus, we have the alternative equation as follows:
\begin{align}
& f_\textrm{tot}(z)=\int_{-\infty}^{\infty} \frac{1}{\sqrt{2 \pi} \sigma_\textrm{tot}} \frac{1}{\sqrt{2 \pi} \frac{\sigma_1 \sigma_2}{\sigma_\textrm{tot}}} \times \notag \\
&~~~~~~~~~~~~~~~~~\exp\left[-\frac{x^2 - 2x\frac{\sigma_1^2 z}{\sigma_\textrm{tot}^2}+\frac{\sigma_1^2 z^2}{\sigma_\textrm{tot}^2}}{2\left(\frac{\sigma_1 \sigma_2}{\sigma_\textrm{tot}}\right)^2}\right] dx  \\
& =\int_{-\infty}^{\infty} \frac{1}{\sqrt{2 \pi} \sigma_\textrm{tot}} \frac{1}{\sqrt{2 \pi} \frac{\sigma_1 \sigma_2}{\sigma_\textrm{tot}}} \times \notag \\
&~~~~~~~~~~~~~~~~~\exp \left[-\frac{\left(x-\frac{\sigma_1^2 z}{\sigma_\textrm{tot}^2}\right)^2-\left(\frac{\sigma_1^2 z}{\sigma_\textrm{tot}^2}\right)^2+\frac{\sigma_1^2 z^2}{\sigma_\textrm{tot}^2}}{2\left(\frac{\sigma_1 \sigma_2}{\sigma_\textrm{tot}}\right)^2}\right] d x \\
& =\int_{-\infty}^{\infty} \frac{1}{\sqrt{2 \pi} \sigma_\textrm{tot}} 
\exp \left[-\frac{\sigma_\textrm{tot}^2 \sigma_1^2 z^2-\left(\sigma_1^2 z\right)^2}{2 \sigma_\textrm{tot}^2\left(\sigma_1 \sigma_2\right)^2}\right] \notag \\
&~~~~~~~~~~\frac{1}{\sqrt{2 \pi} 
\frac{\sigma_1 \sigma_2}{\sigma_\textrm{tot}}} 
\exp \left[-\frac{\left(x-\frac{\sigma_1^2 z}{\sigma_\textrm{tot}^2}\right)^2}{2\left(\frac{\sigma_1 \sigma_2}{\sigma_\textrm{tot}}\right)^2}\right] d x \\
& =\frac{1}{\sqrt{2 \pi} \sigma_\textrm{tot}} 
\exp\left[-\frac{z^2}{2 \sigma_\textrm{tot}^2}\right] \notag \\
&~~~\int_{-\infty}^{\infty} \frac{1}{\sqrt{2 \pi} \frac{\sigma_1 \sigma_2}{\sigma_\textrm{tot}}} 
\exp\left[-\frac{\left(x-\frac{\sigma_1^2 z}{\sigma_\textrm{tot}^2}\right)^2}{2\left(\frac{\sigma_1 \sigma_2}{\sigma_\textrm{tot}}\right)^2}\right] d x.
\end{align}
The expression in the integral is a normal density distribution on $x$, and so the integral evaluates to 1. The desired result follows:
\begin{align}
f_\textrm{tot}(z)=\frac{1}{\sqrt{2 \pi} \sigma_\textrm{tot}} \exp \left[-\frac{z^2}{2 \sigma_\textrm{tot}^2}\right].
\end{align}
Therefore, we have the convolution of two consecutive Gaussian process $f_1\sim \mathcal{N}(0,\sigma^2_1)$ and $f_2\sim \mathcal{N}(0,\sigma^2_2)$ follow Gaussian distribution and satisfies $\sigma^2_\textrm{tot} = \sigma^2_1 + \sigma^2_2$.

\section{Proof on Theorem~\ref{theorem:sequential-semcom-distortion}}
\label{appendix:sequential-semcom-distortion}
We have a sequential process as defined in Figure~\ref{fig:Semcom-Process} as a Markov process. Thus, the posterior distribution at the receiver can be represented as follows: 
\begin{align}
    \mathbb{P}(x_i \vert \widetilde{x}_i) = 
    \mathbb{P}(x^s_i \vert x^r_i) 
    \mathbb{P}(x^r_i \vert x^o_i)
    \mathbb{P}(x^o_i \vert \widetilde{x}_i).
\label{eq:SemCom-Markov}
\end{align}
Here, we denote $x^s_i, x^r_i, x^o_i, \widetilde{x}_i$ as the decompressed data (i.e., effected information loss by channel model and compression), data at the receiver (i.e., effected by ), original data, and the empirical ideal data of the global dataset, respectively. 

\begin{figure}[t]
\centerline{\includegraphics[width=1\linewidth]{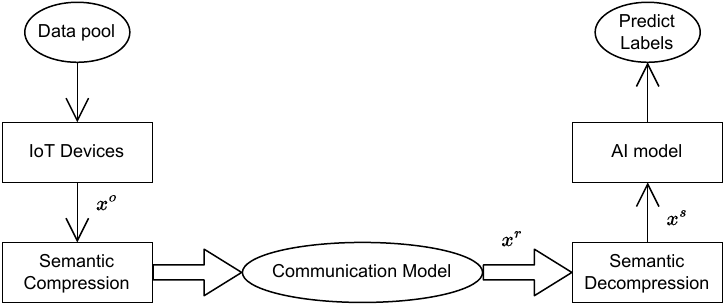}}
\caption{Process of Semantic Communication.}
\label{fig:Semcom-Process}
\end{figure}

Furthermore, the reconstruction loss of a compression system follows the Gaussian distribution $\mathcal{N}(0, \sigma^{\text{sem}})$ \cite{2023-FL-HCFL}. Applying the Gaussian distribution function with Lemma~\ref{lemma:sequential-model-distortion}, the total distortion for the sequential process of the SemCom system is given by: 
\begin{align}
    \sigma^2_\text{tot} =\sigma^2_\text{sem} + \sigma^2_\text{model} +\sigma^2_\text{data}.
\end{align}

\section{Proof on Theorem~\ref{theorem:predict-model-variance}}
\label{appendix:predict-model-variance}
We consider the parameterized hypothesis class $\mathcal{H} = \{ h_\theta \vert \theta \in \mathbf{R}^d \}$, where each member $h_\theta$ is a mapping from $\mathcal{X}_i$ to $\mathcal{Y}_i$ parameterized by $\theta$. With $z_i \triangleq (x_i, y_i) \in \mathcal{Z}_i$, we use $l(h(x_i), y_i)$, shorthand for $l(h_\theta(x_i), y_i)$, to represent the empirical distance between the ground truth $y_i$ and the predicted label $h_\theta(x_i)$. Here, we consider empirical distance $l(h_\theta(x_i), y_i)$ as Euclid distance which is represented as: 
\begin{align}
    l(h_\theta(x_i), y_i) = \Vert h_\theta(x_i) - y_i \Vert
\end{align}
We also assume that the original data $x_i$ is the ideal data (i.e., center on its category of the global dataset) and the hypothesis is well-trained (i.e., the mapping function from $x_i \xrightarrow{h_\theta} y_i$ is correct with $100\%$ confidence).

Taking the hypothesis of distorted data $\hat{x}$ into consideration, we have the Taylor expansion of $h(\hat{x})$ up to the second-order equals to:
\begin{align}
    G(x) &= \sum^{N}_{i=1}\ell(h(\hat{x}_i), y_i) - \sum^{N}_{i=1} \ell(h(\widetilde{x}_i), y_i) \notag \\
    &= 
    \sum^{N}_{i=1}\ell(\hat{x}_i) - \sum^{N}_{i=1} \ell(\widetilde{x}_i) \notag \\
    &=
    \sum^{N}_{i=1}\ell(x_i + \sigma_i) - \sum^{N}_{i=1} \ell(\widetilde{x}_i) \notag \\
    &\overset{(a)}{\leq} 
    \sum^{N}_{i=1}\nabla \ell(x_i) \sigma_i + \sum^{N}_{i=1}\frac{\nabla^2 l(x_i)}{2} \sigma_i^2 \notag \\
    &\leq \frac{L^2}{2} \sum^{N}_{i=1} \sigma_i^2 \notag \leq \frac{L^2}{2N} \widetilde{\sigma}^2.
\end{align}
Here, we denote $\widetilde{\sigma}$ as the variance of the global dataset $\mathcal{D}$ sampled from IoE devices. For simplicity, we define $\ell(h(x_i),y_i)=\ell(x_i)$. The inequality $(a)$ holds due to the assumption~\ref{assumption:dataset-center} of ideal data on well-train model $h^*_\theta$ (i.e., for all $x_i, \exists \rho$ that satisfies  $\nabla h^*_\theta (\widetilde{x_i}) \leq \rho$.
\section{Proof on Theorem - Gradient Dissimilarity}
\label{app:gradient-dissimilarity}
We consider the gradient $\nabla F(\cdot)$ of data point $x_i$ as: $\nabla F(x_i)$. When affected by the distortion, the data is distorted into $x+\sigma$, where $\sigma$ is the distortion of the data via the channel. We have the following lagrange: 
\begin{align}
    \nabla^2 F(x) = \nabla\frac{F(x+\sigma)-\nabla f(x)}{(x+\sigma)-x} \leq L\mathbf{I}.
\end{align}
Therefore, we have: 
\begin{align}
    \nabla F(x+\sigma)-\nabla F(x) \leq L\sigma.
\end{align}
Thus, we have the following gradient dissimilarity: 
\begin{align}
    \nabla F(x+\sigma) \leq \nabla F(x)+L\sigma.
\end{align} 
Therefore, we have: 
\begin{align}
\label{eq:distorted-gradient}
    \Vert\nabla F(x+\sigma)\Vert^2 \leq \Vert\nabla F(x)+L\sigma\Vert^2.
\end{align}

\section{Proof on Theorem - Gradient Inductive}
\label{app:gradient-inductive}
We consider the gradient of distorted data $\hat{x} = x + \sigma$ as $F(\cdot;\hat{x)}$. For ease of the proof, we denote $F(\cdot;\hat{x)}=F(w^{n})$, where $w^n$ is the model parameters $w$ at iteration $n$. We have the followings: 
\begin{align}
\label{eq:taylor1}
    &F(w^{n+1};\hat{x}) = F(w^{n+1}) = F[w^n - \eta \nabla F(w^n)] \notag \\
    &=F(w^n)-\eta\Vert\nabla F(w^n)\Vert^2+\frac{\eta^2}{2}\Vert\nabla F(w^n)\Vert^2\cdot\nabla^2 F(w^n) \notag \\
    &\leq F(w^n)-\eta\Vert\nabla F(w^n)\Vert^2+\eta^2 \frac{L}{2}\Vert\nabla F(w^n)\Vert^2 \notag \\
    &= F(w^n) + \left(\eta^2\frac{L}{2}-\eta\right)\Vert\nabla F(w^n)\Vert^2.
\end{align}
According to L-smooth, we have
\begin{align}
    F(w^{n+1};\hat{x}) \leq &F(w^{n};\hat{x}) + \left(\eta^2\frac{L}{2}-\eta\right)\Vert\nabla F(w^n;\hat{x})\Vert^2 \\
    \leq &F(w^{n};\hat{x}) + \left(\eta^2\frac{L}{2}-\eta\right) \Vert\nabla F(w^n;x)+L\sigma\Vert^2 \notag ,
\end{align}
and according to $\mu$-convex, we have
\begin{align}
    F(w^{n+1};\hat{x}) \geq &F(w^{n};\hat{x}) + \left(\eta^2\frac{\mu}{2}-\eta\right)\Vert\nabla F(w^n;\hat{x})\Vert^2 \\
    \geq &F(w^{n};\hat{x}) + \left(\eta^2\frac{\mu}{2}-\eta\right) \Vert\nabla F(w^n;x)+L\sigma\Vert^2 \notag .
\end{align}
Taking the Generalization Gap into consideration, we have the followings:
\begin{align}
    F&(w^{n+1};\hat{x}) - F(w^{n+1};x) \notag \\
    \leq &\left[F(w^{n};\hat{x}) + \left(\eta^2\frac{L}{2}-\eta\right)\Vert\nabla F(w^n;\hat{x})\Vert^2 \right] \notag \\
    - &\left[F(w^{n};x) + \left(\eta^2\frac{\mu}{2}-\eta\right)\Vert\nabla F(w^n;x)\Vert^2\right] 
    \notag \\
    \leq &\left[F(w^{n};\hat{x}) - F(w^{n};x)\right] \notag \\
    + &\left[\left(\eta^2\frac{L}{2}-\eta\right)\Vert\nabla F(w^n;\hat{x})\Vert^2 - \left(\eta^2\frac{\mu}{2}-\eta\right)\Vert\nabla F(w^n;x)\Vert^2\right] 
    \notag \\ 
    \leq &\left[F(w^{n};\hat{x}) - F(w^{n};x)\right] \notag \\
    + &\left[\left(\eta^2\frac{L}{2}-\eta\right)\Vert\nabla F(w^n;x)+L\sigma\Vert^2\right. \notag \\
    &- \left.\left(\eta^2\frac{\mu}{2}-\eta\right)\Vert\nabla F(w^n;x)\Vert^2\right] 
    \notag \\
    \leq &\left[F(w^{n};\hat{x}) - F(w^{n};x)\right] \notag \\
    + &\left[\left(\eta^2\frac{L}{2}-\eta\right)\left[\Vert\nabla F(w^n;x)\Vert^2+\Vert L\sigma\Vert^2\right] \right. \notag \\
    &- \left.\left(\eta^2\frac{\mu}{2}-\eta\right)\Vert\nabla F(w^n;x)\Vert^2\right] \notag \\ 
    \leq &\left[F(w^{n};\hat{x}) - F(w^{n};x)\right] \\
    + &\left[\left(\eta^2\frac{L}{2}-\eta\right)\Vert L\sigma\Vert^2 + \frac{\eta^2}{2}\left(L-\mu\right)\Vert\nabla F(w^n;x)\Vert^2\right]. \notag 
\end{align}
By choosing the $L$-smooth and $\mu$-strongly convex such that $L=\mu$, we have:
\begin{align}
    &R(w^{n}) = F(w^{n};\hat{x}) - F(w^{n};x) \notag \\
    &\leq F(w^{n-1};\hat{x}) - F(w^{n-1};x) + \left( \eta^2\frac{L}{2}-\eta \right)(L\sigma)^2 \notag \\
    &= F(w^0;\hat{x})-F(w^0;x) + n\left( \eta^2\frac{L}{2}-\eta \right)(L\sigma)^2.
\end{align}
Here, we have $F(w^0;\hat{x}) = F(w^0;x)$ (because at the initial weight $w^0$, the loss on two data is equivalent). Thus, we have: 
\begin{align}
    F(w^n;\hat{x})-F(w^n;x)\leq n\left( \eta^2\frac{L}{2}-\eta \right)(L\sigma)^2.
\end{align}

\begin{figure}[t]
\centerline{\includegraphics[width=1\linewidth]{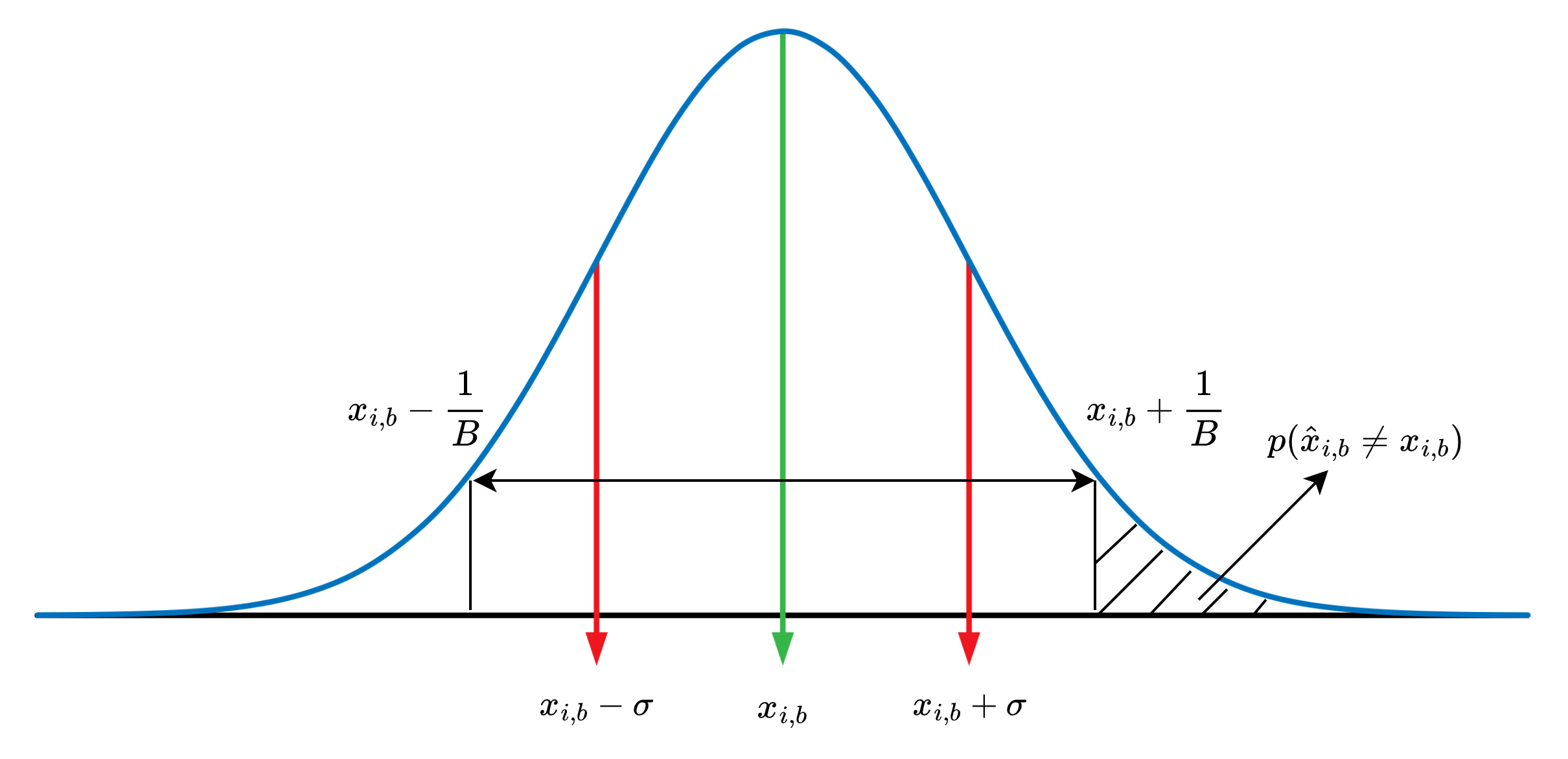}}
\caption{Illustration on probability decision boundary induced by Gaussian noise.}
\label{fig:AI-DecisionBoundary}
\end{figure}

\section{Proof on Lemma - Total Variation}
\label{app:total-variation}
\begin{align}
    \textrm{TV}(x_u,\hat{x}) 
    &= \underset{v\in \mathcal{V}}{\textrm{sup}}\left[ P(x) - P(\hat{x}) \right] \notag \\
    &= \underset{v\in \mathcal{V}}{\textrm{sup}}\left[ \underset{x\in\mathcal{X}}{\int}p(x)dx - \underset{x\in\mathcal{X}}{\int}q(x)dx) \right] \notag \\
    &= \underset{v\in \mathcal{V}}{\textrm{sup}}\left[ \underset{x\in\mathcal{X}}{\int}\langle p(x) - q(x) \rangle dx \right].
\end{align}
Here, we have the transportation policy $v$ is sampled from the set of possible transportation policy $\mathcal{V}$. As mentioned in \cite{2007-DL-GreedyLayerWise}, the input $p(x_i)$ is modeled by the set of $p(x_{i,b})$ where $b$ is the bit index in the data $i$. Therefore, we can have the following: 
\begin{align}
    \textrm{TV}(x,\hat{x}) 
    &= \underset{v\in \mathcal{V}}{\textrm{sup}}\left\vert P(x) - P(\hat{x}) \right\vert \notag \\
    &= \underset{v\in \mathcal{V}}{\textrm{sup}}\left[\underset{x\in x}{\int}\vert p(x) - q(x) \vert dx \right] \notag \\ 
    &= \overset{\vert x\vert}{\underset{i=0}{\sum}} \overset{B}{\underset{b=0}{\sum}} \vert p(x_{i,b})-p(\hat{x}_{i,b})\vert,
\end{align}
where $p(x_{i,b}), p(\hat{x}_{i,b})$ represent the element-wise probability of the data $x_i$ and $\hat{x}_i$ at bit index $b$, respectively. To be more specific, $p(x_{i,b})$ indicates the probability that bit $x_i$ is $1$. Due to the distortion effect, the decision boundary of the data can be moved by noise $v\sim \mathcal{N}(0, \sigma^2)$, where $\sigma^2$ is the variance of the Gaussian noise with mean $0$. We have the variance can be represented as: 
\begin{align}
    \sigma^2 &= \frac{1}{NB}\sum^{N}_{i=1}\sum^{B}_{b=1} (x_{i,b}-\hat{x}_{i,b})^2 = MSE(x_i, \hat{x}_i).
\end{align}
Thus, we can have the data point being induced by the Gaussian noise can be represented as Figure~\ref{fig:AI-DecisionBoundary}. Thus, we have the probability of $p(x_{i,b})-p(\hat{x}_{i,b} = x_{i,b}) = p(\hat{x}_{i,b}\neq p(x_{i,b}))$. Therefore, we can have the followings: 
\begin{align}
    &p(x_{i,b})-p(\hat{x}_{i,b}) 
    = p(\hat{x}_{i,b}\neq x_{i,b}) = p(\frac{v}{\sigma^2} \geq W) \notag \\
    &= \int^{\infty}_{W} \frac{1}{\sqrt{2\pi}\sigma}e^{-\left(\frac{x}{2\sigma}\right)^2} 
    \leq \frac{1}{\sqrt{2\pi}\sigma}e^{-\left(\frac{W}{2\sigma}\right)^2}. 
\end{align}

\section{Proof on Theorem - Inference Divergence}
\label{appendix:inference-divergence}
We consider the gradient of distorted data $\hat{x} = x + \sigma$ as $F(\cdot;\hat{x)}$. For ease of the proof, we denote $F(\cdot;\hat{x)}=F(w^{n})$, where $w^n$ is the model parameters $w$ at iteration $n$. We have the following: 
\begin{align}
    p(\mathbf{\hat{t}}) 
    &= p(\mathbf{t}\vert \mathbf{\hat{s}})p(\mathbf{\hat{s}}) = p(\mathbf{t}\vert s_u)p(\mathbf{\hat{s}}).
\end{align}
We consider the long-term performance of the SemCom system. Thus, we consider the expected output as follows: 
\begin{align}
    \mathbb{E}\left[p(\mathbf{\hat{t}})\right]
    &= \mathbb{E}\left[p(\mathbf{t}\vert s_u)p(\mathbf{\hat{s}})\right] \notag \\
    &= \mathbb{E}\left[p(\mathbf{t}\vert s_u)p(s_u)\right] + \mathbb{E}\left[p(\mathbf{t}\vert s_u)\langle p(\mathbf{\hat{s}}) - p(s_u) \rangle\right] \notag \\
    &= \mathbb{E}\left[p(\mathbf{t})\right] + p(\mathbf{t}\vert s_u)\mathbb{E}\left[ p(\mathbf{\hat{s}}) - p(s_u) \right].
\end{align}
Applying the inequality in Lemma~\ref{lemma:total-variation}, we have the following: 
\begin{align}
    \mathbb{E}\left[p(\mathbf{\hat{t}}) - p(\mathbf{t})\right]
    &\leq \frac{1}{\sqrt{2\pi}\sigma} p(\mathbf{t}\vert s_u) \exp{-\left(\frac{W}{2\sigma}\right)^2}.
\end{align}

\section{Proof on Theorem - Domain Aggregation}
\label{appendix:global-domain-estimation}

\begin{align}
    \sigma^2_\textrm{tot} &= MSE(x_i, \hat{x}_i) \notag \\
    &= \Vert x_{i,b} - \hat{x}_{i,b} \Vert^2_2 = \mathbb{E}[(x - \mu)^2] = \mathbb{E}[x^2] \notag \\
    &= \mathbb{E}_{u\in U}\left[\mathbb{E}_{x\in X_u}[x^2]\right] \notag \\
    &= \frac{1}{\sum^U_{u=1} D_u}\sum D_u \mathbb{E}_{x\in X_u}[x^2] \notag \\
    &= \frac{1}{D} \sum^U_{u=1} D_u \sigma^2_{\textrm{tot},u}.
\end{align}

\textcolor{duong}{\section{Proof on lemma~\ref{lemma:ECS-P}: Boundaries of Ambiguous Constraints}\label{appendix:ECS_x}
The function of the user's neural network is defined as follows:
\begin{align}
Y = \mathbf{W}^{\mathrm{max}}_u \sigma(\mathbf{W}(x)),
\end{align}
where $\mathbf{W}(x)$ is the neural network. The sigma is bounded by $0<\sigma<1$. Therefore, $0<Y<\mathbf{W}^{\mathrm{max}}$.
For instance, $P_\mathrm{ECS}$ of local computation capacity $f_u$ is defined as follows: 
    \begin{align}
    P_\mathrm{ECS}(f_u) = \abs{\max\left\langle f_u - f^{\mathrm{max}}_u, 0 \right\rangle} + \abs{\max\left\langle -f_u, 0 \right\rangle} \geq 0,
    \end{align}
\section{Proof on Theorem~\ref{theorem:ECS_reward}: Boundaries of Explicit Constraints}\label{appendix:ECS_reward}
Using constraints in the format of Definition~\ref{def:ECS-P}, we obtain the total penalty for the EI-integrated DRL system as follows: 
\begin{align}
    \mathbf{r}_\mathrm{DRL-EI} = R - \mathcal{P}_1 - \mathcal{P}_2 - \sum_i P_{\mathrm{ECS-EI},i},
\end{align}
and the total penalty for DRL system without EI design as follows: 
\begin{align}
    \mathbf{r}_\mathrm{DRL} = R - \mathcal{P}_1 - \mathcal{P}_2 - \sum_i P_{\mathrm{ECS},i},
\end{align}
Therefore, we have the following inequalities for the optimal reward: 
\begin{align}
    \mathbf{r}^*_\mathrm{DRL-EI}
    &= R^* - \mathcal{P}_1^* - \mathcal{P}_2^* - \sum_i P^*_{\mathrm{ECS-EI},i} \notag \\ 
    &\overset{(a)}{=} R^* - \mathcal{P}_1^* - \mathcal{P}_2^* \notag \\ 
    &\geq R^* - \mathcal{P}^*_1 - \mathcal{P}^*_2 - \sum_i P^*_{\mathrm{ECS},i} \notag \\
    &\geq \mathbf{r}^*_\mathrm{DRL},
\end{align}
where the equality (a) holds due to Lemma~\ref{lemma:ECS-P}. In other words, the optimal penalty following the ECS is determined as $\mathbf{r}^*_\mathrm{DRL-EI} \geq \mathbf{r}^*_\mathrm{DRL}$.}
\end{document}